\pgfplotsset{compat=newest}
\newtheorem{remark}{Remark}\setcounter{remark}{0}
\newtheorem{definition}{Definition}\setcounter{definition}{0}
\pgfplotsset{compat=newest}
\newcommand{\w}[1]{\textcolor{blue}{#1}}
\DeclareMathOperator*{\argmax}{argmax}
\definecolor{darkblue}{rgb}{0, 0.2, 0.4}
\title{DeepSN: A Sheaf Neural Framework for Influence Maximization}
\author{
    Asela Hevapathige,
    Qing Wang,
    Ahad N. Zehmakan
}
\begin{document}

\maketitle

\begin{abstract}
Influence maximization is key topic in data mining, with broad applications in social network analysis and viral marketing. In recent years, researchers have increasingly turned to machine learning techniques to address this problem. They have developed methods to learn the underlying diffusion processes in a data-driven manner, which enhances the generalizability of the solution, and have designed optimization objectives to identify the optimal seed set. Nonetheless, two fundamental gaps remain unsolved: (1) Graph Neural Networks (GNNs) are increasingly used to learn diffusion models, but in their traditional form, they often fail to capture the complex dynamics of influence diffusion, (2) Designing optimization objectives is challenging due to combinatorial explosion when solving this problem. To address these challenges, we propose a novel framework, DeepSN. Our framework employs sheaf neural diffusion to learn diverse influence patterns in a data-driven, end-to-end manner, providing enhanced separability in capturing diffusion characteristics. We also propose an optimization technique that accounts for overlapping influence between vertices, which helps to reduce the search space and identify the optimal seed set effectively and efficiently. Finally, we conduct extensive experiments on both synthetic and real-world datasets to demonstrate the effectiveness of our framework.
\end{abstract}

%

\section{Introduction}

Influence maximization (IM) is a challenging network science problem that involves identifying a set of vertices which, when activated, maximize the spread of influence across the network. IM has significant real-world applications including  viral marketing \cite{li2009discovering,kempe2003maximizing}, disease control \cite{marquetoux2016using}, social media content management \cite{hosseini2017community}, and crisis communication \cite{fan2021role}. Despite decades of research, IM still remains challenging primarily due its exponentially large search space and the complex nature of the influence diffusion processes. 

A plethora of traditional methods have been proposed to obtain optimal or near-optimal solutions for IM \cite{kempe2003maximizing,leskovec2007cost,wang2010community,tang2015influence,li2019tiptop}. These methods vary in strategy and effectiveness: some offer theoretical performance guarantees, while others use heuristics to enhance scalability. A common trait among these traditional approaches is their reliance on the explicit specification of the influence diffusion model as input. Recently, researchers have turned to learning-based methods focusing on their ability to automatically identify the diffusion model from the ground truth and accommodate multiple diffusion models, thereby achieving broader applicability \cite{li2023survey}.

Learning-based approaches for IM encounter two key challenges. \emph{(1) Effectively modeling the underlying diffusion model from ground truth: } There is a growing interest in leveraging Graph Neural Networks (GNNs) for modeling influence propagation, due to their ability to capture structural insights in networks~\cite{kumar2022influence,ling2023deep,panagopoulos2023maximizing}. However, existing work relies on traditional GNN techniques like attention \cite{lee2019attention} and convolution \cite{zhang2019graph}. While these traditional GNNs excel in tasks with a static nature such as vertex classification and graph regression, their inductive bias and inherent assumptions limit their ability to model the dynamic behaviors of influence diffusion.  They also suffer from issues like over-smoothing \cite{chen2020measuring} which hampers their capacity to capture global information and long-range dependencies crucial for influence diffusion \cite{xia2021deepis}. \emph{(2) Identifying the optimal subset of vertices with maximal Influence: } Designing an optimization objective to select the optimal seed set is arduous due to the vast search space, especially in large networks. Existing learning-based approaches often use deep reinforcement learning \cite{li2022piano,chen2023touplegdd} and ranking-based methods \cite{kumar2022influence,panagopoulos2020influence} to approximate the optimal seed set. However, these methods often incur significant computational costs and suffer from a lack of interpretability.


Our work addresses the limitations of existing approaches by leveraging sheaf theory \cite{tennison1975sheaf,bredon2012sheaf}, an algebraic-topological framework that captures the topological and geometric properties of complex networks, essential for understanding dynamic processes. Current sheaf diffusion GNNs \cite{hansen2020sheaf,bodnar2022neural,barbero2022sheaf} are limited by their homogeneous diffusion behaviors and inability to handle the evolving dynamics crucial for influence propagation. To tackle this, we propose \emph{DeepSN}, a novel sheaf GNN framework designed to address the influence maximization problem. Our architecture effectively estimates diverse influence propagation processes by learning adaptive structural relationships between vertices, enabling effective identification of influence patterns in the network. Building on this, we design a seed set inference mechanism that uses subgraphs to efficiently reduce the search space in influence maximization. Our contributions are summarized as follows. 

\begin{itemize}
  \item \textbf{Influence Diffusion: } We redefine influence propagation as a sheaf diffusion-reaction process, capturing the intricate dynamics found in real-world diffusion models.

  \item \textbf{GNN Architecture: } We propose a novel GNN architecture rooted in sheaf theory, designed to capture the unique nuances of influence propagation.
  \item \textbf{Seed Selection:} We use a subgraph-based strategy leveraging the structural insights from our sheaf GNN to minimize overlapping influence among seed vertices, thereby reducing the search space for influence maximization.
  \item \textbf{Experiments: } We rigorously evaluate our framework on both real-world and synthetic datasets, demonstrating superior performance and generalizability over state-of-the-art methods.
  
\end{itemize}

\section{Related Work}

\paragraph{Influence Maximization Methods} IM methods generally fall into two main categories: traditional and learning-based. Traditional approaches include simulation-based, proxy-based, and sketch-based methods \cite{li2018influence}. Simulation-based methods rely on Monte Carlo simulations for stochastic evaluation with theoretical guarantees, while proxy-based methods use heuristics for efficient seed set approximation. Sketch-based methods combine both simulation and proxy methods, balancing theoretical guarantees and computational efficiency. For detailed discussions on these traditional methods, see the surveys by \citet{li2018influence} and \citet{banerjee2020survey}.

Contrary to traditional methods that require a specific diffusion model as input, learning-based models excel in generalizability, adapting to multiple diffusion models. Several studies have leveraged deep reinforcement learning for this problem~\cite{li2022piano, ma2022influence, wang2021reinforcement, chen2023touplegdd}, and some works explored the use of GNNs in this context~\cite{xia2021deepis,kumar2022influence,ling2023deep,panagopoulos2023maximizing}.However, reinforcement learning methods often face scalability challenges due to exploration complexity, limiting their application in large-scale networks. Meanwhile, GNNs typically rely on traditional architectures that struggle to capture dynamic diffusion phenomena and mainly focus on progressive models, limiting their adaptability to non-progressive scenarios. Our work breaks from these limitations by incorporating an inductive bias guided by diffusion-reaction processes, which is capable of modeling both information diffusion and intrinsic transformations that occur during the diffusion process. This allows us to effectively model complex propagation patterns in both progressive and non-progressive spread dynamics. 

\paragraph{Diffusion GNNs} 
A diffusion process typically refers to the spread of information across a structure over time. In graph representation learning, diffusion manifests in various areas, including graph generation \cite{liu2023generative} and information propagation \cite{khoshraftar2024survey}. Generative diffusion GNNs \cite{niu2020permutation,bao2022equivariant} use diffusion processes to learn graph distributions and generate new graphs through iterative, learned operations. On the other hand, information propagation-based diffusion GNNs \cite{gasteiger2019diffusion,chamberlain2021grand,zhao2021adaptive}  model the propagation of information in graphs by discretizing an underlying partial differential equation. Sheaf theory was integrated into diffusion GNNs by \citet{bodnar2022neural}. Subsequently, several studies have adopted sheaf-based GNNs for a variety of downstream tasks \cite{duta2024sheaf,caralt2024complex,nguyen2024sheaf}. Reaction-diffusion models improve diffusion by adding regularization. Several diffusion-based GNNs \cite{wang2022acmp, choi2023gread, eliasof2024feature} incorporate reaction terms as constraints to prevent oversmoothing and preserve the distinctiveness of vertex features.

Our work fundamentally differs from existing diffusion models. While generative diffusion models focus on global graph generation, we model influence propagation as a dynamic process within graphs. Traditional diffusion GNNs often treat propagation as a uniform process, focused on influence spread, overlooking individual vertex dynamics and neighboring transitions. In contrast, we incorporate reaction terms into sheaf structures to capture evolving dynamics and vertex transitions, setting our method apart from models that primarily target static tasks.






 \section{Problem Formulation}


Let $G=(V, E)$ be a graph with the vertex set $V$,  the edge set $E$, $|V| = n$ and  $|E|=m$. We denote $A \in \{0, 1\}^{n \times n}$  to be the adjacency matrix of $G$. The set of neighboring vertices for vertex $v$ is denoted as $N(v) = \{u \in V \mid (u,v)\in E\}$.

\paragraph{Influence Maximization} Influence maximization is a graph optimization problem that aims to identify a subset of vertices, known as the \emph{seed set} (i.e., initially activated vertices), in a graph to maximize influence spread according to a specified diffusion function. We formally define it below.

\begin{definition}[Influence Maximization Problem]
Given a graph \( G = (V, E) \), the \emph{influence maximization} (IM) problem is to find a subset \( S^{\star} \subseteq V \) of up to \( k \) vertices that maximizes an expected influence diffusion function \( \sigma \). More precisely,
\[
S^{\star} = \argmax_{|S| \leq k} \sigma(S, G; \theta).
\]
Here, \( S^{\star} \) is referred to as the optimal seed set and \( \sigma(S, G; \theta) \) represents the expected influence diffusion of the seed set \( S \) in the graph \( G \) (that is, the expected final number of activations) with model parameters \( \theta \). 
\end{definition}

The influence diffusion function \(\sigma(\cdot; \theta) \) 
can be instantiated with various models to capture different dynamics of influence spread. In the Linear Threshold (LT) model \cite{granovetter1978threshold}, \( \theta \) denotes vertex thresholds and edge weights, with vertices becoming active when the weighted sum of neighbors' influences exceeds their thresholds. In the Independent Cascade (IC) model \cite{goldenberg2001using}, \( \theta \) represents edge activation probabilities, where active vertices have a single chance to activate each neighbor. The Susceptible-Infected-Susceptible (SIS) model \cite{d2008note} extends this by incorporating infection and recovery rates in \( \theta \), allowing vertices to transition between susceptible and infected states over time. 

\paragraph{Our Work} In this paper, we introduce a deep learning framework called \emph{DeepSN} to address the influence maximization problem. Our framework comprises two phases: \emph{learning to estimate influence} and \emph{optimizing seed selection}. In the influence estimation phase, our goal is to measure the total influence exerted by a set of initially activated vertices in a learnable way. This estimated influence is then used in the seed selection optimization phase to identify the optimal seed set that maximizes overall influence. \cref{fig:DeepSNN summary} provides an overview of our framework.

\vspace{0.15cm}
 Note that all eliminated proofs from the main content, due to space constraints, are provided in the appendix.


\begin{figure*}[]
  \centering  \includegraphics[width=1\textwidth, height=0.18\textheight]{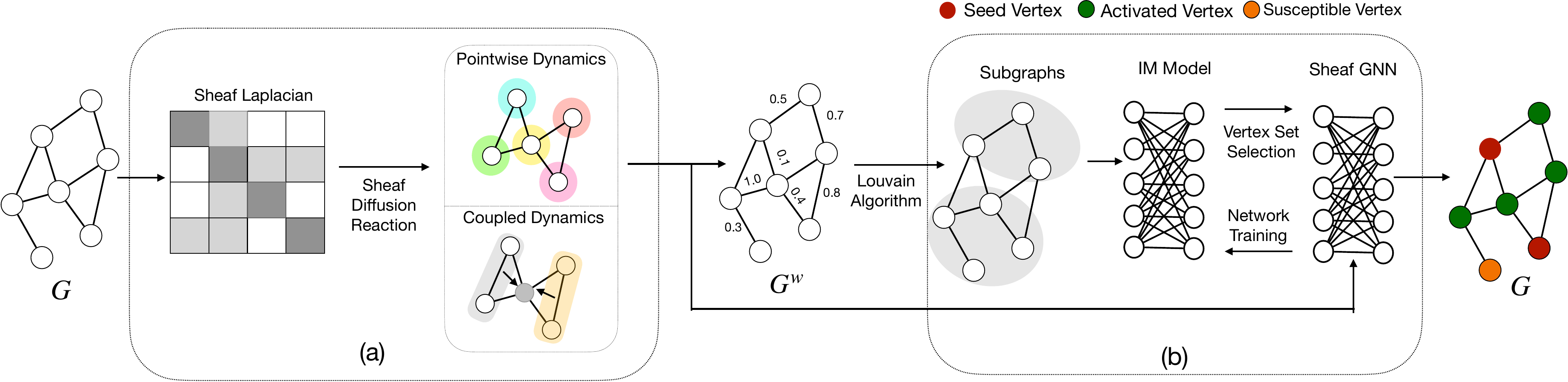}
  \caption{The DeepSN framework consists of two phases: (a) learning to estimate influence with sheaf GNN; b) optimizing seed selection using the subgraphs, IM model, and trained sheaf GNN.}
  \label{fig:DeepSNN summary}
\end{figure*}

\section{Learning to Estimate Influence}

We introduce a novel sheaf GNN model based on sheaf theory, specifically designed to adapt to the complex diffusion patterns inherent in influence propagation. 


\subsection{Topological Sheaf Diffusion}
We begin by introducing the concept of \emph{cellular sheaf}, the foundational building block of our GNN.

\begin{definition} [Cellular Sheaf]
     A \emph{(cellular) sheaf} $(G, F)$ on a graph $G=(V,E)$ consists of  a vector space  $F_v$ for each vertex $v \in V$ ,  a vector space $F_e$ for each edge $e \in E$, a linear map $\mathcal{F}_{v \trianglelefteq e}: \mathcal{F}_v \to \mathcal{F}_e$ for each incident vertex-edge pair $v \trianglelefteq e$. 
\end{definition}

 \( \mathcal{F}_v \) and \( \mathcal{F}_e \) are the \emph{vertex sheaf} and \emph{edge sheaf}, respectively, while \( \mathcal{F}_{v \trianglelefteq e} \) is  the \emph{transformation map}. Let $\bigoplus$ denote the direct sum of vector spaces. The space of \emph{0-cochains} 
is the direct sum of vector spaces over the vertices: 
$C^{0} (G, F) = \bigoplus_{v \in V} F_v$, and the space of \emph{1-cochains} 
is the direct sum of vector spaces over the edges: 
$C^{1} (G, F) = \bigoplus_{e \in E} F_e$. 

 We formulate influence propagation in a network as an opinion propagation process, where each vertex corresponds to a vertex sheaf, representing a private opinion. Edge sheaves represent public opinions related to influence propagation, while transformation maps translate information from vertex sheaves to edge sheaves, extracting public opinions from private opinions. Let \( x_v \in F_v \) be a \( d \)-dimensional feature vector for each \( v \in V \). The \emph{coboundary map} \( \delta: C^0(G, F) \rightarrow C^1(G, F) \) is a linear transformation,
defined for an edge $(v, u)$ as 
\begin{equation}
 \delta(x)_e = \mathcal{F}_{v \trianglelefteq e} . x_v - \mathcal{F}_{u \trianglelefteq e} . x_u
\end{equation}
where \( \mathcal{F}_{v \trianglelefteq e} \cdot x_v \) and \( \mathcal{F}_{u \trianglelefteq e} \cdot x_u \) represent the public opinions associated with \( e \), originating from vertices \( v \) and \( u \), respectively.


The sheaf Laplacian of a vertex measures the disparity between its public opinion and that of its neighbors. To capture the fine-grained nuances of trust, a crucial factor in modeling influence propagation, we incorporate learnable sheaf coefficients to model each vertex's confidence in the opinions received from its neighbors.

\begin{definition} [Non-linear Sheaf Laplacian]
Let $\psi_{vu} \in [0, 1]$. The sheaf Laplacian $L_\mathcal{F} : C^0(G; F) \to C^0(G; F)$ on a sheaf $(G, F)$ is defined vertex-wise as
\begin{equation}
L_\mathcal{F}(x_v) = \sum_{v,u \trianglelefteq e} \mathcal{F}_{v \trianglelefteq e}^T . \psi_{vu}. \left(\mathcal{F}_{v \trianglelefteq e} . x_v - \mathcal{F}_{u \trianglelefteq e} . x_u \right).
\label{eq:sl}
\end{equation}
 \end{definition}

Here, \( \psi_{vu} \) is the sheaf coefficient which is learnable. Note that the sheaf Laplacian employed by \citet{bodnar2022neural} is a specific instance of our Laplacian, where \( \psi_{u,v} = 1 \) for all \( (u,v) \in E \).


 To ensure that the Laplacian matrix $\hat{L}_\mathcal{F}$ remains positive definite, a necessary condition for the convergence of the diffusion process, we apply the following modification:

\begin{equation}
   \hat{L}_\mathcal{F} = L_\mathcal{F} + \epsilon \cdot I  
\end{equation}
 where $\epsilon$ is a scalar, $I$ is the identity matrix, and $\cdot$ denotes element-wise multiplication. The following lemma establishes the necessary and sufficient conditions for $\hat{L}_\mathcal{F}$.

 \begin{restatable}[]{lemma}{lemmps} Let \(\lambda_{\text{min}}\) denote the smallest eigenvalue of \(L_\mathcal{F}\).
\(\hat{L}_\mathcal{F}\) is positive definite if and only if $\epsilon > -\lambda_{\text{min}}$. 
\end{restatable}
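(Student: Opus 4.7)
The plan is to exploit the well-known spectral shift identity: for any square matrix $M$, the spectrum of $M + \epsilon I$ is obtained from the spectrum of $M$ by translation by $\epsilon$. Applied to $\hat{L}_\mathcal{F} = L_\mathcal{F} + \epsilon I$, if $\lambda_1 \le \lambda_2 \le \dots \le \lambda_N$ are the eigenvalues of $L_\mathcal{F}$ (with $\lambda_1 = \lambda_{\min}$), then the eigenvalues of $\hat{L}_\mathcal{F}$ are exactly $\lambda_i + \epsilon$. This is immediate because if $L_\mathcal{F} v = \lambda v$, then $\hat{L}_\mathcal{F} v = (\lambda + \epsilon) v$, and the converse direction is identical.

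Next, I would invoke the standard spectral characterization of positive definiteness: a symmetric (or Hermitian) matrix is positive definite if and only if all of its eigenvalues are strictly positive. Thus $\hat{L}_\mathcal{F}$ is positive definite iff $\lambda_i + \epsilon > 0$ for every $i$, which is equivalent to the single inequality $\epsilon > -\lambda_1 = -\lambda_{\min}$. This gives both directions of the ``if and only if'' in one stroke: $\epsilon > -\lambda_{\min}$ forces all shifted eigenvalues positive, while $\epsilon \le -\lambda_{\min}$ produces at least one non-positive shifted eigenvalue (namely $\lambda_{\min} + \epsilon$) and hence an eigenvector witnessing the failure of positive definiteness through $v^\top \hat{L}_\mathcal{F} v \le 0$.

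The one subtlety worth addressing is the symmetry of $L_\mathcal{F}$, which is needed to guarantee real eigenvalues and to justify the eigenvalue-based positive-definiteness criterion. In the classical construction of \citet{bodnar2022neural}, $L_\mathcal{F}$ has the block form $D^\top D$ (with $D$ assembled from the coboundary maps $\mathcal{F}_{v \trianglelefteq e}$) and is therefore symmetric positive semidefinite. In the non-linear generalization in \cref{eq:sl}, symmetry is preserved provided the sheaf coefficients are symmetric, i.e.\ $\psi_{vu} = \psi_{uv}$; I would state this assumption explicitly (it is natural, since $\psi_{vu}$ models a shared pairwise trust), so that $L_\mathcal{F}$ factors as $D^\top \Psi D$ with $\Psi$ a diagonal positive semidefinite matrix indexed by oriented edge contributions, restoring symmetry and real spectrum.

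The main (and really only) obstacle is therefore not the spectral algebra, which is routine, but the care required at the interface: verifying that under the symmetry assumption on $\psi$, $L_\mathcal{F}$ indeed has real eigenvalues so that ``smallest eigenvalue $\lambda_{\min}$'' is a meaningful quantity and so that the PD criterion applies. Once that is pinned down, the remainder of the argument is a two-line application of the spectral shift together with the eigenvalue characterization of positive definiteness.
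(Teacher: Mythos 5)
Your argument is essentially identical to the paper's proof: both directions follow from the spectral shift $\mu_i = \lambda_i + \epsilon$ combined with the eigenvalue characterization of positive definiteness. Your extra remark about requiring symmetry of $L_\mathcal{F}$ (e.g.\ $\psi_{vu} = \psi_{uv}$) so that the spectrum is real and the eigenvalue criterion applies is a careful point the paper leaves implicit, but it does not change the route of the proof.
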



The sheaf diffusion operator (i.e., normalized sheaf Laplacian) models opinion diffusion by capturing discrepancies between vertex opinions and enabling smooth information propagation across the network, defined as
\begin{equation}
    \Delta_{\mathcal{F}} = D^{-\frac{1}{2}} L_\mathcal{F} D^{-\frac{1}{2}},
\end{equation}
where \( D \) is the block-diagonal of \( L_\mathcal{F} \). 


 Let $x \in C^0(G, F)$ denote an \( nd \)-dimensional vector constructed by column-stacking the individual vectors \( x_v \). This vector \( x \) is then transformed into a vertex feature matrix \( X \in \mathbb{R}^{(nd) \times f} \), where each column corresponds to a vector in \( C^0(G; F) \) and \( f \) is the number of feature channels. The 
 sheaf standard diffusion process is defined by the following Partial Differential Equation (PDE),

\begin{equation}
  \begin{aligned}
X(0) &= X, \hspace{0.5cm}
\frac{\partial X(t)}{\partial t} &= - \Delta_{\mathcal{F}} X(t).
\end{aligned} 
\label{eq:epde}
\end{equation}

Given a diffusion coefficient $\alpha\in[0,1]$, the discrete-time update rule for diffusion is defined as
\begin{equation}
    X(t+1) = X(t) - \alpha\cdot \frac{\partial X(t)}{\partial t}; \alpha > 0.
\end{equation}

When vertex features do not change between time steps, (i.e., \(X(t+1) = X(t)\)), a fixed point \(X(t)\) is reached, which is also referred to as a \emph{steady state}.


\subsection{Sheaf Reaction Diffusion}

Traditional diffusion GNNs including sheaf diffusion in \cref{eq:epde} \cite{bodnar2022neural,hansen2020sheaf} use a uniform diffusion process that primarily focuses on influence spreading but overlooks how individual vertex characteristics and neighboring state transitions shape the process, which is essential for modeling influence diffusion. To address this issue, we model influence dynamics as a diffusion-reaction process 
\cite{turing1990chemical,choi2023gread}. In this process, \emph{diffusion} refers to the spread of information across the network, governed by its connectivity structure, and \emph{reaction} involves altering this information based on interactions and inherent characteristics of vertices within the network. 

\paragraph{Reaction Operators} To capture complex dynamics related to influence diffusion, we introduce the \emph{sheaf reaction diffusion} which extends the sheaf standard diffusion with two reaction operators for \emph{pointwise dynamics} and \emph{coupled dynamics}. That is,
\begin{equation}
\begin{split}
\frac{\partial X_v(t)}{\partial t} &= - \underbracket[1pt]{\alpha \Delta_{\mathcal{F}} X_v(t)}_{\text{Diffusion}} + \underbracket[1pt]{\beta A\left(X_v(t) \right)}_{\text{Pointwise Dynamics}} \\
&\quad + \underbracket[1pt]{\gamma R_v\left(X(t), A, S^t\right)}_{\text{Coupled Dynamics}}
\end{split}
\label{eq:spde}
\end{equation}

Here, \( X_v(t) \) represents the feature vector of vertex \( v \) at time \( t \).
 \(\alpha\), \(\beta\), and \(\gamma\) control the contribution of each term, and \(S^t \in [0, 1]^{n}\) is a vector representing the activation probabilities of vertices at time step \(t\).

 \subparagraph{Pointwise Dynamics} Pointwise dynamics refers to inherent characteristics that govern its activation or susceptibility. For instance, in models of opinion dynamics or epidemic propagation, a vertex may alter its activation state due to internal factors such as personal reassessment or spontaneous recovery, independent of its interactions with other vertices. Thus, we design a reaction operator to capture such vertex evolution as
 \begin{equation*}
     A\left(X_v(t) \right) = \Phi_v^1 \odot \frac{X_v(t)}{\kappa_v^1 + |X_v(t)|}
 \end{equation*}
where $\Phi_v^1, \kappa_v^1 \in \mathbb{R}^{d \times f}$ are coefficient vectors with $\kappa_v^1 > 0, \kappa_v^1 \neq -|X_v(t)|$, and $\odot$ is the Hadamard product.

 \subparagraph{Coupled Dynamics} Coupled dynamics describe how neighboring interactions of vertices influence the activation or susceptibility of a vertex, creating a dynamic interplay between individual behaviors and network-wide interactions. We define a reaction operator for vertex \( v \in V \) as

\begin{align*}
X_v^{\dagger}(t) &=\sum_{u \in N(v)} \bigl(  S_u^t \cdot X_u(t) \bigl) -  \sum_{u \in N(v)}  \bigl( (1 - S_u^t) \cdot X_u(t) \bigl);
\end{align*}

\begin{equation*}
R_v\left(X(t), A, S^t\right) = \Phi_v^2 \odot \frac{X_v^{\dagger}(t)}{\kappa_v^2 + |X_v^{\dagger}(t)|}
\end{equation*}
where \(\Phi_v^2\) and \(\kappa_v^2\) are coefficient vectors with $\kappa_v^2 > 0, \kappa_v^2 \neq -|X_v^{\dagger}(t)|$ for any $t > 0$, and \(S_u^t\) denotes the activation probability of vertex \(u\) at time \(t\). This reaction operator accounts for the combined impact of both activated and susceptible neighbors by producing either positive or negative effects: a high activation level in the neighborhood  yields a positive impact, whereas a high susceptibility leads to a negative impact. Thus, it can enhance the model’s ability to capture complex dynamics in a network.

\begin{remark} Recent research has largely focused on progressive models \cite{chen2022information, chen2022adaptive}, where vertices remain indefinitely active once activated. However, this is often unrealistic, as many real-world propagation models involve vertices transitioning between active and inactive states \cite{lou2014modeling}. While diffusion terms typically follow fixed-point dynamics, our sheaf diffusion model uses reaction operators to introduce non-monotonic, adaptive, and oscillatory behaviors, capturing dynamic vertex state transitions.
\end{remark}

\paragraph{Stability of Reaction Diffusion} Stability in a diffusion process refers to its ability to maintain a controlled and predictable behavior over time \cite{wang2022acmp}. A stable process maintains vertex features within a bounded region over time. This bounded behavior allows the diffusion process to converge to or remain near a fixed point. If vertex features become unbounded, it signals instability, indicating that the diffusion process can deviate significantly from a fixed point or expected behavior \cite{bellman1947boundedness,sastry2013nonlinear}. Such instability can lead to unpredictable outcomes and make it difficult to control or interpret diffusion dynamics effectively.
In the following, we explain the bounded behavior of the reaction operators.

\begin{restatable}[]{lemma}{lemmbound}
The reaction operators in \cref{eq:spde} are bounded for all \( t > 0 \) and \( v \in V \) in terms of norms $||\cdot||$. We have \( \| A(X_v(t)) \| < \| \Phi_v^1 \| \) and \( \| R_v(X(t), A, S^t) \| < \| \Phi_v^2 \| \). 
\label{lemma:bound}
\end{restatable}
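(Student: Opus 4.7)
The plan is to exploit the common algebraic shape shared by both reaction operators. Each can be written as $\Phi \odot \frac{y}{\kappa + |y|}$ for appropriate choices of $y$, $\Phi$, and $\kappa$: namely $y = X_v(t)$, $\Phi = \Phi_v^1$, $\kappa = \kappa_v^1$ for the pointwise operator $A(X_v(t))$, and $y = X_v^{\dagger}(t)$, $\Phi = \Phi_v^2$, $\kappa = \kappa_v^2$ for the coupled operator $R_v(X(t), A, S^t)$. Once the operators are put in this unified form, the entire lemma reduces to showing that the scalar quantity $\frac{y_i}{\kappa_i + |y_i|}$ is strictly bounded by $1$ in absolute value under the paper's assumptions on $\kappa$.

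First I would establish the entrywise bound. For any real scalar $y_i$ and any $\kappa_i > 0$, one has $\left|\frac{y_i}{\kappa_i + |y_i|}\right| = \frac{|y_i|}{\kappa_i + |y_i|} < 1$, where the strict inequality follows because the denominator strictly exceeds the numerator by $\kappa_i > 0$. The paper's side condition $\kappa \neq -|y|$ merely rules out a vanishing denominator, but in fact the positivity of $\kappa$ already makes the denominator strictly positive. Multiplying both sides pointwise by $|\Phi_i|$ then yields $\left|\left[\Phi \odot \tfrac{y}{\kappa + |y|}\right]_i\right| < |\Phi_i|$ whenever $\Phi_i \neq 0$, with both sides zero otherwise.

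Next I would lift this entrywise bound to the claimed norm inequality. Any monotonic entrywise norm — the Frobenius norm, or any $L_p$ norm applied to the matrix entries — is strictly increasing in the entrywise absolute values provided at least one coordinate gives a strict comparison. Since the $\Phi_v^j$ are nonzero learnable parameters, at least one entry is nonzero and thereby contributes a strict inequality, giving $\|A(X_v(t))\| < \|\Phi_v^1\|$ and $\|R_v(X(t), A, S^t)\| < \|\Phi_v^2\|$. Applying this argument once with $y = X_v(t)$ and once with $y = X_v^{\dagger}(t)$ completes both bounds simultaneously.

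The argument is essentially a one-line scalar inequality lifted to norms, so there is no serious obstacle. The only point requiring care is justifying strictness: strict inequality in the norm requires the standard nondegeneracy $\Phi_v^j \neq 0$, which is mild for a learned coefficient vector. A secondary remark is that the argument is insensitive to the sign pattern inside $X_v^{\dagger}(t)$ — only $|X_v^{\dagger}(t)|$ appears in the denominator — so the signed difference between activated and susceptible neighbor contributions poses no issue.
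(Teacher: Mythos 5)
Your proof is correct and follows essentially the same route as the paper's: an entrywise bound on the saturating factor $\tfrac{y_i}{\kappa_i+|y_i|}$ (strictly below $1$ in magnitude since $\kappa_i>0$), multiplied by $\Phi$ via the Hadamard product and lifted to the norm inequality. In fact your version is slightly more careful than the paper's, which omits the absolute-value handling of sign patterns and the nondegeneracy condition $\Phi_v^j \neq 0$ needed for the stated strict inequality, so no changes are required.
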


The boundedness of the reaction operators ensures that the fixed-point of \cref{eq:spde} is also bounded, as demonstrated in the lemma below.

\begin{restatable}[]{lemma}{lemmstate} 
Let $X^{\star}$ denote the fixed point of \cref{eq:spde}. Given that $\Delta_{\mathcal{F}}$ is well-defined (i.e., positive definite) and operates on a finite-dimensional space, $X^{\star}$ is bounded. 
\label{lemm:state}
\end{restatable}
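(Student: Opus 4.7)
The plan is to read off the desired bound directly from the fixed-point equation, turning it into an implicit inequality and then eliminating the self-reference using the a priori bounds established in \cref{lemma:bound}. At a steady state $X^{\star}$, the left-hand side of \cref{eq:spde} vanishes, so for every vertex $v$ we have
\begin{equation*}
\alpha\, \Delta_{\mathcal{F}} X_v^{\star} \;=\; \beta\, A(X_v^{\star}) \;+\; \gamma\, R_v(X^{\star}, A, S^{\star}).
\end{equation*}
Stacking vertex-wise into the full vector in $C^{0}(G, F)$ gives $\alpha\, \Delta_{\mathcal{F}} X^{\star} = \beta\, A(X^{\star}) + \gamma\, R(X^{\star}, A, S^{\star})$. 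Because $\Delta_{\mathcal{F}}$ is assumed positive definite, it is invertible on the finite-dimensional space $C^{0}(G, F)$, and I can solve explicitly for $X^{\star} = \tfrac{1}{\alpha} \Delta_{\mathcal{F}}^{-1}\bigl[\beta\, A(X^{\star}) + \gamma\, R(X^{\star}, A, S^{\star})\bigr]$.

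The next step is to take norms and apply the triangle inequality, using the operator-norm estimate $\|\Delta_{\mathcal{F}}^{-1}\| = 1/\lambda_{\min}(\Delta_{\mathcal{F}}) < \infty$, which is a standard consequence of positive definiteness on a finite-dimensional inner-product space. This yields
\begin{equation*}
\|X^{\star}\| \;\leq\; \frac{1}{\alpha\, \lambda_{\min}(\Delta_{\mathcal{F}})}\, \bigl(|\beta|\cdot \|A(X^{\star})\| \,+\, |\gamma|\cdot \|R(X^{\star}, A, S^{\star})\|\bigr).
\end{equation*}
Now I invoke \cref{lemma:bound}: aggregated over all vertices, $\|A(X^{\star})\|$ and $\|R(X^{\star}, A, S^{\star})\|$ are both bounded by expressions depending only on the fixed sheaf coefficients $\{\Phi_v^{1}\}$ and $\{\Phi_v^{2}\}$, and not on the state $X^{\star}$ itself. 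Substituting back produces a finite constant $C = C(\alpha,\beta,\gamma, \{\Phi_v^{1}\}, \{\Phi_v^{2}\}, \lambda_{\min}(\Delta_{\mathcal{F}}))$ with $\|X^{\star}\| \leq C$, which is exactly the boundedness claim.

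The main obstacle is conceptual: since $X^{\star}$ appears on both sides of the fixed-point equation, a naive rearrangement would leave a circular inequality, and a direct contraction-mapping argument would demand extra assumptions (e.g.\ Lipschitz control on $A$ and $R$ relative to $\alpha\, \lambda_{\min}(\Delta_{\mathcal{F}})$). The key observation that avoids this is that \cref{lemma:bound} gives a \emph{state-independent} upper bound on the reaction terms, so the self-reference dissolves and no iterative argument is required. The only residual care is to verify that the aggregation from vertex-wise bounds to a bound on the stacked vector $X^{\star} \in C^{0}(G, F)$ is carried out consistently in a single norm, which is routine since $C^{0}(G, F)$ is finite-dimensional and all norms on it are equivalent.
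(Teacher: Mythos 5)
Your proof is correct and follows essentially the same route as the paper: set the time derivative to zero at the fixed point, use the state-independent bounds on the reaction operators from \cref{lemma:bound}, and exploit positive definiteness of $\Delta_{\mathcal{F}}$ to convert the bound on $\Delta_{\mathcal{F}}X^{\star}$ into a bound on $X^{\star}$ itself. In fact your constant $1/(\alpha\,\lambda_{\min})$ is the correct one, since passing from $\|\Delta_{\mathcal{F}}X^{\star}\|$ to $\|X^{\star}\|$ requires $\|\Delta_{\mathcal{F}}^{-1}\|=1/\lambda_{\min}$; the paper's final line divides by $\lambda_{\max}$ (the norm of $\Delta_{\mathcal{F}}$ rather than of its inverse), so your version quietly repairs that slip.
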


\subsection{Sheaf GNN Training}

Building on the sheaf Laplacian and reaction operators, our GNN applies the following diffusion propagation rule to update vertex features:
\begin{equation}
\begin{aligned}
X_v(t+1) &= X_v(t) - \alpha \bigg( \Delta_{\mathcal{F}} \left( I_n \otimes W_1^t \right) X_v(t) W_2^t \bigg) \\
&\quad + \beta \bigg(\Phi_v^1 \odot \frac{X_v(t)}{\kappa_v^1 + X_v(t)} \bigg) \\
&\quad + \gamma \bigg( \Phi_v^2 \odot \frac{X_v^{\dagger}(t)}{\kappa_v^2 + |X_v^{\dagger}(t)|} \bigg)
\end{aligned}
\end{equation}

Notably, \(W_1^t \in \mathbb{R}^{d \times d}\), \(W_2^t \in \mathbb{R}^{f \times f}\), and \(\Phi_v^1, \Phi_v^2, \kappa_v^1, \kappa_v^2 \in \mathbb{R}^{d \times f}\) are learnable weight matrices, \(\otimes\) denotes the Kronecker product, and \(I_n\in \mathbb{R}^{n \times n}\) represents an identity matrix. Moreover, \(X(0)\) is obtained by transforming the vertex features of the network through a multi-layer perceptron (MLP), followed by reshaping, resulting in a matrix of dimensions \((nd) \times f\).

After obtaining the updated vertex embeddings in each iteration, we utilize a non-linear neural function \(f_{\eta}(\cdot)\), parameterized by $\eta$, to determine the activation state of each vertex $v \in V$ at the time step $t+1$ as

\begin{equation}
S_v^{t+1} = f_\eta \bigg(X_v(t+1)\bigg); S_v^{t+1} \in [0, 1].
\end{equation}

We use the Mean Square Error (MSE) loss to measure the difference between the predicted activation probabilities \(\hat{S}\) and the ground truth probabilities \(Y \in [0,1]^n\) as:

 



\begin{equation} \label{IEloss}
\mathcal{L}_{train} = || \hat{S} - Y||^2_2 
\end{equation}

\begin{table*}[htbp]
    \centering
    \begin{minipage}
    {0.98\textwidth}
    \centering
    \begin{adjustbox}{width=1.0\textwidth}
    \begin{tabular}{lcccccccccccc|cccccccccccc}
        \toprule
        & \multicolumn{4}{c}{Cora-ML (IC)} & \multicolumn{4}{c}{Network Science (IC)} & \multicolumn{4}{c}{Power Grid (IC)} & \multicolumn{4}{c}{Cora-ML (LT)} & \multicolumn{4}{c}{Network Science (LT)} & \multicolumn{4}{c}{Power Grid (LT)} \\
        \cmidrule(lr){2-5} \cmidrule(lr){6-9} \cmidrule(lr){10-13} \cmidrule(lr){14-17} \cmidrule(lr){18-21} \cmidrule(lr){22-25}
        Methods & 1\% & 5\% & 10\% & 20\% & 1\% & 5\% & 10\% & 20\% & 1\% & 5\% & 10\% & 20\% & 1\% & 5\% & 10\% & 20\% & 1\% & 5\% & 10\% & 20\% & 1\% & 5\% & 10\% & 20\% \\
        \midrule
        IMM & 8.1 & 26.2 & 37.3 & 50.2 & 5.2 & 16.8 & 27.0 & 45.7 & 5.6 & 17.4 & 31.5 & 51.1 & 1.7 & 34.8 & 52.2 & 66.4 & 2.5 & 11.8 & 18.1 & 33.6 & 4.6 & 19.9 & 31.7 & 56.9 \\
        OPIM & 13.4 & 26.9 & 37.4 & 50.9 & 6.4 & 19.4 & 28.9 & 48.6 & 5.7 & 17.7 & 29.7 & 50.1 & 2.3 & 36.9 & 51.2 & 71.5 & 1.6 & 12.0 & 18.1 & 34.1 & 4.4 & 21.6 & 29.4 & 55.5 \\
        SubSIM & 10.1 & 25.7 & 36.8 & 51.1 & 4.8 & 15.4 & 27.9 & 44.8 & 4.6 & 19.2 & 31.7 & 50.2 & 1.7 & 33.6 & 54.7 & 70.1 & 1.8 & 10.4 & 19.2 & 34.1 & 4.5 & 21.1 & 31.2 & 57.4 \\
        \hline
        IMINfECTOR & 9.6 & 26.8 & 37.7 & 50.6 & 5.4 & 17.9 & 27.8 & 47.6 & 5.4 & 18.2 & 31.6 & 50.9 & 2.1 & 33.9 & 51.3 & 70.6 & 2.1 & 11.8 & 18.7 & 34.5 & 4.2 & 21.3 & 31.6 & 56.2 \\
        PIANO & 9.8 & 25.2 & 37.4 & 51.1 & 5.3 & 18.1 & 27.1 & 47.2 & 5.3 & 18.1 & 31.7 & 50.2 & 2.1 & 33.5 & 53.3 & 69.8 & 2.1 & 11.3 & 19.1 & 33.9 & 4.3 & 21.3 & 31.4 & 57.1 \\
        ToupleGDD & 10.6 & 27.5 & 38.5 & 51.5 & 6.3 & 17.8 & 28.3 & 50.5 & 5.4 & 19.3 & 31.6 & 51.3 & 2.3 & 36.2 & 54.5 & 70.9 & 2.8 & 12.4 & 19.8 & 34.6 & 4.8 & 21.9 & 32.6 & 58.1 \\
        DeepIM & 14.1 & 28.1 & 39.6 & 52.4 & 7.8 & 20.9 & 31.5 & 51.2 & 6.3 & 21.0 & 32.5 & 52.4 & \textbf{13.4} & \textbf{69.2} & \textbf{83.5} & 94.1 & \textbf{4.1} & \textbf{16.6} & 26.7 & 41.5 & 6.3 & 24.4 & 46.8 & 71.7 \\
        \hline
        DeepSN & 11.5 & 25.6 & 40.9 & 52.8 & 6.2 & \textbf{22.0} & \textbf{32.0} & \textbf{52.4} & 6.4 & \textbf{24.0} & 36.9 & \textbf{61.0} & 7.4 & 40.7 & 68.2 & \textbf{95.3} & 2.9 & 14.4 & 25.3 & \textbf{52.0} & \textbf{6.3} & \textbf{24.6} & \textbf{47.2} & \textbf{73.2} \\
        DeepSN$_{SP}$ & \textbf{14.2} & \textbf{30.1} & \textbf{42.3} & \textbf{58.4} & \textbf{7.9} & 18.6 & 30.0 & 51.2 & \textbf{7.1} & 22.4 & \textbf{37.4} & 57.4 & 7.9 & 46.4 & 72.8 & 93.8 & 3.6 & 15.5 & \textbf{27.8} & 51.8 & 5.2 & 23.8 & 40.3 & 68.1 \\
        \bottomrule
    \end{tabular}
    \end{adjustbox}
    \label{tab:combined_results}
    \end{minipage}
    \begin{minipage}{0.98\textwidth}
\centering
    \begin{adjustbox}{width=\textwidth}
    \begin{tabular}{lcccccccccccccccccccccccc}
        \toprule
        & \multicolumn{4}{c}{Cora-ML (SIS)} & \multicolumn{4}{c}{Network Science (SIS)} & \multicolumn{4}{c}{Power Grid (SIS)} & \multicolumn{4}{c}{Jazz (SIS)} & \multicolumn{4}{c}{Random (SIS)} & \multicolumn{4}{c}{Digg (SIS)}\\
        \cmidrule(lr){2-5} \cmidrule(lr){6-9} \cmidrule(lr){10-13} \cmidrule(lr){14-17} \cmidrule(lr){18-21} \cmidrule(lr){22-25} 
        Methods & 1\% & 5\% & 10\% & 20\% & 1\% & 5\% & 10\% & 20\% & 1\% & 5\% & 10\% & 20\% & 1\% & 5\% & 10\% & 20\% & 1\% & 5\% & 10\% & 20\% & 1\% & 5\% & 10\% & 20\%\\
        \midrule
        
        IMM & 2.0 & 9.5 & 15.4 & 27.6 & 1.3 & 5.6 & 12.2 & 22.1 & 1.1 & 5.6 & 11.0 & 22.9 & 7.6 & 37.8 & 55.6 & 67.1 & 2.7 & 12.6 & 20.9 & 37.7 & 2.5 & 9.4 & 16.3 & 32.6\\
        OPIM & 2.3 & 9.3 & 16.2 & 27.2 & 1.4 & 5.9 & 13.0 & 22.1 & 1.2 & 5.9 & 11.1 & 22.4 & 8.2 & 35.1 & 56.8 & 68.3 & 2.8 & 12.5 & 20.2 & 36.1 & 2.3 & 9.3 & 16.5 & 32.3\\
        SubSIM & 2.3 & 9.2 & 16.9 & 28.8 & 1.5 & 5.6 & 12.2 & 23.3 & 1.2 & 5.6 & 11.4 & 21.9 & 2.9 & 30.1 & 53.8 & 67.0 & 3.2 & 14.4 & 24.5 & 39.1 & 2.5 & 9.5 & 16.1 & 32.3\\
        \hline
        IMINfECTOR & 2.1 & 9.4 & 16.1 & 27.9 & 1.7 & 5.8 & 12.4 & 22.3 & 1.3 & 5.5 & 10.2 & 23.1 & 8.8 & 35.4 & 54.8 & 66.2 & 2.5 & 12.4 & 20.5 & 36.6 & 2.3 & 9.1 & 16.4 & 32.4\\
        DeepIM & 7.1 & 16.1 & 21.9 & 30.8 & 2.7 & 8.7 & 15.1 & 25.1 & 1.9 & 7.6 & 13.3 & 23.8 & 27.1 & 57.1 & 68.1 & 74.1 & 3.2 & 14.4 & 24.5 & 39.1 & 5.6 & 11.4 & 18.8 & \textbf{36.3} \\
        \hline
        DeepSN & 12.8 & 24.7 & 34.2 & 46.5 & 2.0 & 9.6 & 16.1 & 28.1 & \textbf{2.4} & \textbf{9.8} & \textbf{15.7} & 25.3 & \textbf{34.3} & \textbf{64.9}  & \textbf{75.6}  & \textbf{85.8} & \textbf{5.2} & \textbf{24.2} & \textbf{37.6} & \textbf{54.1}  & \textbf{16.1} & 20.2  & \textbf{24.7} & 33.2\\
        DeepSN$_{SP}$ & \textbf{16.8} & \textbf{29.9} & \textbf{37.9} & \textbf{46.9} &  \textbf{2.7} & \textbf{9.9} & \textbf{17.4} & \textbf{29.0} & 2.1 & 8.2 & 14.9 & \textbf{26.3} & 35.2 &  58.3 & 73.4 & 82.4 & 4.1 & 18.7 & 32.8 & 52.6 & 16.1 & \textbf{20.3} & 23.6 & 33.1  
        \\
        \bottomrule
    \end{tabular}
    \end{adjustbox}
    \label{tab:sis_results}
    
    \end{minipage}
    \caption{Performance of DeepSN variants for influence maximization, compared to baseline methods, under IC, LT, and SIS models. The best results are highlighted in \textbf{bold}. Baseline results are sourced from \citet{ling2023deep}.}
    \label{tab:im_results}
\end{table*}

\begin{filecontents*}{jazz_lt.dat}
SeedSetSize GCN GAT GraphSAGE DSN DeepSN
1 0.273 0.256 0.272 0.205 0.007
5 0.297 0.275 0.296 0.223 0.052
10 0.328 0.309 0.328 0.218 0.055
20 0.205 0.328 0.288 0.141 0.066
\end{filecontents*}

\begin{filecontents*}{cora_ml_lt.dat}
SeedSetSize GCN GAT GraphSAGE DSN DeepSN
1 0.311 0.294 0.31 0.249 0.109
5 0.42 0.372 0.301 0.386 0.15
10 0.224 0.343 0.171 0.287 0.101
20 0.093 0.291 0.085 0.138 0.074
\end{filecontents*}

\begin{filecontents*}{netscience_lt.dat}
SeedSetSize GCN GAT GraphSAGE DSN DeepSN
1 0.277 0.258 0.275 0.115 0.023
5 0.331 0.307 0.294 0.212 0.11
10 0.37 0.293 0.248 0.182 0.088
20 0.299 0.286 0.189 0.151 0.043
\end{filecontents*}

\begin{filecontents*}{power_grid_lt.dat}
SeedSetSize GCN GAT GraphSAGE DSN DeepSN
1 0.303 0.285 0.302 0.148 0.087
5 0.408 0.351 0.364 0.324 0.17
10 0.377 0.392 0.274 0.332 0.168
20 0.205 0.334 0.355 0.214 0.188
\end{filecontents*}

\begin{filecontents*}{jazz_sis.dat}
SeedSetSize GCN GAT GraphSAGE DSN DeepSN
1 0.362 0.35 0.361 0.339 0.185
5 0.429 0.43 0.364 0.389 0.157
10 0.371 0.419 0.339 0.357 0.127
20 0.396 0.425 0.27 0.313 0.075
\end{filecontents*}

\begin{filecontents*}{cora_ml_sis.dat}
SeedSetSize GCN GAT GraphSAGE DSN DeepSN
1 0.298 0.281 0.298 0.212 0.097
5 0.337 0.324 0.336 0.272 0.18
10 0.368 0.354 0.331 0.295 0.201
20 0.397 0.402 0.395 0.266 0.204
\end{filecontents*}

\begin{filecontents*}{netscience_sis.dat}
SeedSetSize GCN GAT GraphSAGE DSN DeepSN
1 0.279 0.262 0.277 0.123 0.03
5 0.299 0.276 0.292 0.152 0.077
10 0.32 0.297 0.296 0.166 0.109
20 0.337 0.308 0.324 0.186 0.127
\end{filecontents*}

\begin{filecontents*}{power_grid_sis.dat}
SeedSetSize GCN GAT GraphSAGE DSN DeepSN
1 0.278 0.258 0.277 0.104 0.028
5 0.304 0.287 0.303 0.147 0.093
10 0.331 0.317 0.33 0.17 0.146
20 0.381 0.367 0.356 0.238 0.168
\end{filecontents*}

\begin{filecontents*}{random5_lt.dat}
SeedSetSize GCN GAT GraphSAGE DSN DeepSN
1 0.274 0.254 0.273 0.098 0.011 
5 0.293 0.275 0.293 0.097 0.044
10 0.323 0.308 0.322 0.109 0.075
20 0.302 0.249 0.476 0.101 0.129
\end{filecontents*}

\begin{filecontents*}{random5_sis.dat}
SeedSetSize GCN GAT GraphSAGE DSN DeepSN
1 0.285 0.267 0.284 0.118 0.059
5 0.334 0.332 0.333 0.202 0.176
10 0.380 0.388 0.380 0.290 0.178
20 0.483 0.475 0.363 0.359 0.256
\end{filecontents*}

\section{Optimizing Seed Selection}

In this section, we present a learning-based approach to optimally select a seed set that maximizes influence spread. 
 Selecting an optimal seed set for influence maximization faces a combinatorial explosion, with vertex combinations growing exponentially as the graph size increases.

 We address this challenge leveraging two observations. 
First, instead of relying on the adjacency matrix for graph connectivity, which fails to capture network dynamics in diffusion models, we learn the connectivity through sheaf coefficients, enhancing model flexibility. This leads to a weighted graph $G^w$ that is constructed from sheaf coefficients and the adjacency matrix of the input graph. Then, we employ a partitioning mechanism to identify subgraphs, which reduces the search space in the process of determining the optimal seed set. We apply the Louvain algorithm~\cite{blondel2008fast,dugue2015directed,traag2019louvain} to divide the graph $G^w$ into \( r \) subgraphs  $\{G_i\}_{i=1}^r$ , minimizing the overlap of influence between vertices in different subgraphs. Then, allocating seed vertices across subgraphs can significantly reduce the search space by limiting the number of vertex combinations within each smaller subgraph, rather than across the entire graph.  

We train a neural network $\mathcal{T}_{\phi}$, parameterized by $\phi$, to select seed vertices within subgraphs in a learnable manner. More specifically, $\mathcal{T}_{\phi}$ learns to select $S_i\subseteq V_i$ seed vertices from each subgraph $G_i=(V_i,E_i)$ such that $ S= \bigcup_{i=1}^{k} S_i$ can maximize the overall influence spread 
\begin{equation*}
  S^* = \mathcal{T}_{\phi} \bigg(\{G_i\}_{i=1}^r, G^w\bigg) \text{ subject to }\bigwedge_{i\in [1,r]} |S_i| \leq \frac{k}{n}|V_i|.
\end{equation*}
The condition in the above equation ensures that the number of seeds is chosen  proportionally to the size of each subgraph. $\mathcal{T}_{\phi}$ is trained using a loss function based on the difference between the maximal influence and the predicted influence
\begin{equation}
\begin{aligned}
    \mathcal{L}_{train} &= n - \sigma\bigg(\bigcup_{i=1}^{r} S_i, G^w; \theta\bigg).
\end{aligned}
\end{equation}

 Note that we use the sheaf GNN with trained parameters $\theta$ to approximate the influence diffusion function~$\sigma(\cdot)$.

\section{Complexity Analysis} 


We first analyze the layer-wise complexity of our GNN component. The diffusion operation within our GNN has a complexity of $O\left(n \left(f^2d^2 + d^3\right) + m \left(fd + d^3\right)\right)$, where \( m \) is the number of edges. The complexity of this operation is similar to that reported in \citet{bodnar2022neural}. Each reaction operator introduces an additional complexity of \( O(ndf) \). Consequently, the total complexity of our GNN is
$O\left(n \left(f^2d^2 + d^3\right) + m \left(fd + d^3\right)\right)$. Given that we employ $d= \{1, 2\}$ in our experiments, our GNN incurs only a constant overhead compared to traditional GNNs, such as GCN \cite{kipf2016semi}.

The Louvain algorithm, used as a preprocessing step, has a time complexity of \(O(l m)\), where \(l\) is the number of iterations, and a space complexity of \(O(n + m)\) \cite{lancichinetti2009community}. In our experiments, we employ DeepSN$_{\text{SP}}$ with a sparsified graph structure, resulting in a time complexity of \(O(l\times n\log n)\) and a space complexity of \(O(n)\). We implement $\mathcal{T}_{\phi}$
  using a multi-layer perceptron (MLP) with a time complexity of $O(ndh)$, where $h$ represents the number of hidden neurons in the MLP.

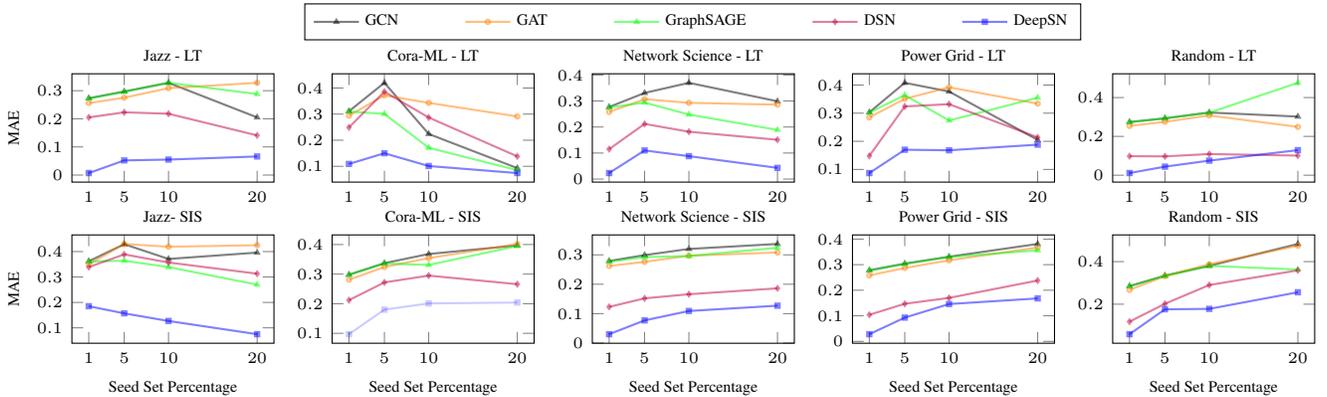
\begin{figure*}[t]
    \centering
\begin{tikzpicture}
    \begin{groupplot}[
        group style={
            group name=my plots,
            group size=5 by 2,
            xlabels at=edge bottom,
            ylabels at=edge left,
            horizontal sep=22pt,
            vertical sep=20pt
        },
        width=0.24\textwidth,
        height=0.17\textwidth,
        xlabel={Seed Set Percentage},
        ylabel={MAE},
        xtick=data,
        xticklabel style={rotate=0, anchor=north},
        legend style={at={(1.15, 1.65)}, anchor=north west, legend columns=6, font=\tiny, /tikz/every even column/.append style={column sep=0.8cm}},
        cycle list name=color list,  
        every axis plot/.append style={thick},
        title style={yshift=-1.0ex, font=\tiny},
        xlabel style={font=\tiny},
        ylabel style={font=\tiny},
        tick label style={font=\tiny}
    ]
    
    \nextgroupplot[title={Jazz - LT}]
    \addplot[black, mark=triangle, mark size=0.8 pt, opacity = 0.5]  table[x=SeedSetSize,y=GCN] {jazz_lt.dat};
    \addlegendentry{GCN}
    \addplot[orange, mark=o, mark size=0.8 pt, opacity = 0.5] table[x=SeedSetSize,y=GAT] {jazz_lt.dat};
    \addlegendentry{GAT}
    \addplot[green, mark=triangle*, mark size=0.8 pt, opacity = 0.5] table[x=SeedSetSize,y=GraphSAGE] {jazz_lt.dat};
    \addlegendentry{GraphSAGE}
    \addplot[purple, mark=diamond*, mark size=0.8 pt, opacity = 0.5] table[x=SeedSetSize,y=DSN] {jazz_lt.dat};
    \addlegendentry{DSN}
    \addplot[blue, mark=square*, mark size=0.8 pt, opacity = 0.5] table[x=SeedSetSize,y=DeepSN] {jazz_lt.dat};
    \addlegendentry{DeepSN}

    \nextgroupplot [title={Cora-ML - LT}]
    \addplot[black, mark=triangle, mark size=0.8 pt, opacity = 0.5] table[x=SeedSetSize,y=GCN] {cora_ml_lt.dat};
    \addplot[orange, mark=o, mark size=0.8 pt, opacity = 0.5] table[x=SeedSetSize,y=GAT] {cora_ml_lt.dat};
    \addplot[green, mark=triangle*, mark size=0.8 pt, opacity = 0.5] table[x=SeedSetSize,y=GraphSAGE] {cora_ml_lt.dat};
    \addplot[purple, mark=diamond*, mark size=0.8 pt, opacity = 0.5] table[x=SeedSetSize,y=DSN] {cora_ml_lt.dat};
    \addplot[blue, mark=square*, mark size=0.8 pt, opacity = 0.5] table[x=SeedSetSize,y=DeepSN] {cora_ml_lt.dat};

    \nextgroupplot[title={Network Science - LT}]
    \addplot[black, mark=triangle, mark size=0.8 pt, opacity = 0.5] table[x=SeedSetSize,y=GCN] {netscience_lt.dat}; 
    \addplot[orange, mark=o, mark size=0.8 pt, opacity = 0.5] table[x=SeedSetSize,y=GAT] {netscience_lt.dat}; 
    \addplot[green, mark=triangle*, mark size=0.8 pt, opacity = 0.5] table[x=SeedSetSize,y=GraphSAGE] {netscience_lt.dat}; 
    \addplot[purple, mark=diamond*, mark size=0.8 pt, opacity = 0.5] table[x=SeedSetSize,y=DSN] {netscience_lt.dat}; 
    \addplot[blue, mark=square*, mark size=0.8 pt, opacity = 0.5] table[x=SeedSetSize,y=DeepSN] {netscience_lt.dat}; 

    \nextgroupplot[title={Power Grid - LT}]
    \addplot[black, mark=triangle, mark size=0.8 pt, opacity = 0.5] table[x=SeedSetSize,y=GCN] {power_grid_lt.dat}; 
    \addplot[orange, mark=o, mark size=0.8 pt, opacity = 0.5] table[x=SeedSetSize,y=GAT] {power_grid_lt.dat}; 
    \addplot[green, mark=triangle*, mark size=0.8 pt, opacity = 0.5] table[x=SeedSetSize,y=GraphSAGE] {power_grid_lt.dat}; 
    \addplot[purple, mark=diamond*, mark size=0.8 pt, opacity = 0.5] table[x=SeedSetSize,y=DSN] {power_grid_lt.dat}; 
    \addplot[blue, mark=square*, mark size=0.8 pt, opacity = 0.5] table[x=SeedSetSize,y=DeepSN] {power_grid_lt.dat}; 

    \nextgroupplot[title={Random - LT}]
    \addplot[black, mark=triangle, mark size=0.8 pt, opacity = 0.5] table[x=SeedSetSize,y=GCN] {random5_lt.dat}; 
    \addplot[orange, mark=o, mark size=0.8 pt, opacity = 0.5] table[x=SeedSetSize,y=GAT] {random5_lt.dat}; 
    \addplot[green, mark=triangle*, mark size=0.8 pt, opacity = 0.5] table[x=SeedSetSize,y=GraphSAGE] {random5_lt.dat}; 
    \addplot[purple, mark=diamond*, mark size=0.8 pt, opacity = 0.5] table[x=SeedSetSize,y=DSN] {random5_lt.dat}; 
    \addplot[blue, mark=square*, mark size=0.8 pt, opacity = 0.5] table[x=SeedSetSize,y=DeepSN] {random5_lt.dat}; 

    \nextgroupplot[title={Jazz- SIS}]
    \addplot[black, mark=triangle, mark size=0.8 pt, opacity = 0.5] table[x=SeedSetSize,y=GCN] {jazz_sis.dat}; 
    \addplot[orange, mark=o, mark size=0.8 pt, opacity = 0.5] table[x=SeedSetSize,y=GAT] {jazz_sis.dat}; 
    \addplot[green, mark=triangle*, mark size=0.8 pt, opacity = 0.5] table[x=SeedSetSize,y=GraphSAGE] {jazz_sis.dat}; 
    \addplot[purple, mark=diamond*, mark size=0.8 pt, opacity = 0.5] table[x=SeedSetSize,y=DSN] {jazz_sis.dat}; 
    \addplot[blue, mark=square*, mark size=0.8 pt, opacity = 0.5] table[x=SeedSetSize,y=DeepSN] {jazz_sis.dat};

    \nextgroupplot[title={Cora-ML - SIS}]
    \addplot[black, mark=triangle, mark size=0.8 pt, opacity = 0.5] table[x=SeedSetSize,y=GCN] {cora_ml_sis.dat}; 
    \addplot[orange, mark=o, mark size=0.8 pt, opacity = 0.5] table[x=SeedSetSize,y=GAT] {cora_ml_sis.dat}; 
    \addplot[green, mark=triangle*, mark size=0.8 pt, opacity = 0.5] table[x=SeedSetSize,y=GraphSAGE] {cora_ml_sis.dat}; 
    \addplot[purple, mark=diamond*, mark size=0.8 pt, opacity = 0.5] table[x=SeedSetSize,y=DSN] {cora_ml_sis.dat}; 
    \addplot[blue, mark=square*, mark size=0.8 pt, opacity = 0.2] table[x=SeedSetSize,y=DeepSN] {cora_ml_sis.dat};

    \nextgroupplot[title={Network Science - SIS}]
    \addplot[black, mark=triangle, mark size=0.8 pt, opacity = 0.5] table[x=SeedSetSize,y=GCN] {netscience_sis.dat}; 
    \addplot[orange, mark=o, mark size=0.8 pt, opacity = 0.5] table[x=SeedSetSize,y=GAT] {netscience_sis.dat}; 
    \addplot[green, mark=triangle*, mark size=0.8 pt, opacity = 0.5] table[x=SeedSetSize,y=GraphSAGE] {netscience_sis.dat}; 
    \addplot[purple, mark=diamond*, mark size=0.8 pt, opacity = 0.5] table[x=SeedSetSize,y=DSN] {netscience_sis.dat}; 
    \addplot[blue, mark=square*, mark size=0.8 pt, opacity = 0.5] table[x=SeedSetSize,y=DeepSN] {netscience_sis.dat};

    \nextgroupplot[title={Power Grid - SIS}]
    \addplot[black, mark=triangle, mark size=0.8 pt, opacity = 0.5] table[x=SeedSetSize,y=GCN] {power_grid_sis.dat}; 
    \addplot[orange, mark=o, mark size=0.8 pt, opacity = 0.5] table[x=SeedSetSize,y=GAT] {power_grid_sis.dat}; 
    \addplot[green, mark=triangle*, mark size=0.8 pt, opacity = 0.5] table[x=SeedSetSize,y=GraphSAGE] {power_grid_sis.dat}; 
    \addplot[purple, mark=diamond*, mark size=0.8 pt, opacity = 0.5] table[x=SeedSetSize,y=DSN] {power_grid_sis.dat}; 
    \addplot[blue, mark=square*, mark size=0.8 pt, opacity = 0.5] table[x=SeedSetSize,y=DeepSN] {power_grid_sis.dat};  

    \nextgroupplot[title={Random - SIS}]
    \addplot[black, mark=triangle, mark size=0.8 pt, opacity = 0.5] table[x=SeedSetSize,y=GCN] {random5_sis.dat}; 
    \addplot[orange, mark=o, mark size=0.8 pt, opacity = 0.5] table[x=SeedSetSize,y=GAT] {random5_sis.dat}; 
    \addplot[green, mark=triangle*, mark size=0.8 pt, opacity = 0.5] table[x=SeedSetSize,y=GraphSAGE] {random5_sis.dat}; 
    \addplot[purple, mark=diamond*, mark size=0.8 pt, opacity = 0.5] table[x=SeedSetSize,y=DSN] {random5_sis.dat}; 
    \addplot[blue, mark=square*, mark size=0.8 pt, opacity = 0.5] table[x=SeedSetSize,y=DeepSN] {random5_sis.dat}; 
    \end{groupplot}
\end{tikzpicture}\vspace{-0.3cm}
\caption{Performance of DeepSN for influence estimation in terms of MAE (Mean Absolute Error), compared to baseline methods. Results under IC model are provided in the appendix.}
\label{fig:estimation}
\end{figure*}

\section{Vertex Feature Separability}\label{sec:expressivity}

We investigate the separation power of our sheaf GNN for vertex features, which plays a crucial role in mitigating oversmoothing ~\cite{bodnar2022neural}. The following proposition shows the existence of the fixed point in the sheaf diffusion process under \cref{eq:epde} and the corresponding properties of transformation maps.

\begin{restatable}[]{proposition}{propsn}
For any graph $G = (V, E)$ with sheaf Laplacian $L_{\mathcal{F}}$ and initial vertex features $X(0)$, there exists a unique fixed-point \(X(t)\) in the sheaf diffusion process 
such that for any $(v,u)\in E$,  \( \mathcal{F}_{v \trianglelefteq e} x_v = \mathcal{F}_{u \trianglelefteq e} x_u \) holds. 
\end{restatable}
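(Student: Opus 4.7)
The plan is to exhibit the fixed point by solving the linear sheaf diffusion equation explicitly, and then to translate the algebraic condition ``lies in $\ker(\Delta_{\mathcal{F}})$'' into the edge-wise agreement of transformation maps.

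First, I would invoke the explicit solution of the linear heat equation in \cref{eq:epde}, namely $X(t) = e^{-\Delta_{\mathcal{F}} t} X(0)$. Using the previously stated lemma, $\hat{L}_{\mathcal{F}}$ (and hence the matrix used in constructing $D$) is positive definite, so the block-diagonal normalizer $D$ is invertible and $\Delta_{\mathcal{F}} = D^{-1/2} L_{\mathcal{F}} D^{-1/2}$ is similar to a positive semi-definite operator on a finite-dimensional space. Diagonalizing $\Delta_{\mathcal{F}}$ and decomposing $X(0)$ in the corresponding eigenbasis, the components along strictly positive eigenvalues decay exponentially to $0$, while the component in $\ker(\Delta_{\mathcal{F}})$ is preserved. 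Consequently $X(t)$ converges to a limit $X^{\star}$ that equals the orthogonal projection of $X(0)$ onto $\ker(\Delta_{\mathcal{F}})$; this projection is uniquely determined by $X(0)$, giving both existence and uniqueness of the fixed point of the trajectory.

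Second, I would characterize $\ker(\Delta_{\mathcal{F}})$ through the coboundary factorization $L_{\mathcal{F}} = \delta^{\top} \Psi \delta$, where $\Psi$ is the block-diagonal built from the sheaf coefficients $\psi_{vu}$. Taking the associated quadratic form,
\begin{equation*}
\langle X, L_{\mathcal{F}} X \rangle = \sum_{(v,u) \in E} \psi_{vu} \bigl\| \mathcal{F}_{v \trianglelefteq e} x_v - \mathcal{F}_{u \trianglelefteq e} x_u \bigr\|^2,
\end{equation*}
each summand is nonnegative and each coefficient strictly positive, so $L_{\mathcal{F}} X = 0$ is equivalent to $\delta X = 0$, i.e., $\mathcal{F}_{v \trianglelefteq e} x_v = \mathcal{F}_{u \trianglelefteq e} x_u$ on every edge. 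Since $D$ is invertible, $\Delta_{\mathcal{F}} X = 0$ iff $D^{-1/2} X \in \ker(L_{\mathcal{F}})$, so the same edge-wise agreement describes the fixed points of the normalized flow up to the fixed rescaling by $D^{-1/2}$.

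I expect the main obstacle to be handling the learnable non-linear Laplacian of \cref{eq:sl} rather than the textbook sheaf Laplacian used by Bodnar et al.\ The argument relies on $L_{\mathcal{F}}$ admitting the weighted coboundary factorization, which in turn requires the sheaf coefficients to be symmetric ($\psi_{vu} = \psi_{uv}$) and strictly positive. I would make these the standing assumptions, noting that they are consistent with $\psi_{vu} \in [0,1]$ together with the positive-definiteness hypothesis needed for the diffusion to be well-defined (and already used implicitly in Lemma 1); otherwise one would have to replace $L_{\mathcal{F}}$ by its symmetric part, which would complicate the kernel characterization. A secondary, minor subtlety is verifying that the limit $X^{\star}$ is indeed attained in the continuous-time flow and that this suffices for the discrete update rule to have a fixed point with the claimed property, which follows because the same eigen-decomposition governs both dynamics.
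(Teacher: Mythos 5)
Your proposal is correct (under the hypotheses you make explicit), but it takes a genuinely different route from the paper. The paper's proof of this proposition is a two-line appeal to Theorem 2.2 of \citet{hansen2021opinion}: the trajectory of \cref{eq:epde} converges to the harmonic space $H_0(G;\mathcal{F})=\{x \mid \mathcal{F}_{u\trianglelefteq e}x_u=\mathcal{F}_{v\trianglelefteq e}x_v\}$, and the edge-wise agreement is read off from membership in that space. You instead reprove the convergence from first principles: solve the flow as $X(t)=e^{-\Delta_{\mathcal{F}}t}X(0)$, diagonalize, let the strictly positive modes decay so that the limit is the orthogonal projection of $X(0)$ onto $\ker(\Delta_{\mathcal{F}})$ (which is what makes ``unique fixed point'' precise: uniqueness of the limit of the trajectory, not of harmonic sections in general), and then characterize the kernel through the weighted coboundary factorization $L_{\mathcal{F}}=\delta^{\top}\Psi\delta$ and its quadratic form. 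What your version buys is that it actually covers the paper's learnable Laplacian of \cref{eq:sl}: the cited theorem is stated for the standard sheaf Laplacian, so the paper's citation implicitly needs exactly the symmetry ($\psi_{vu}=\psi_{uv}$) and strict positivity of the sheaf coefficients that you state as standing assumptions (with $\psi_{vu}=0$ the kernel is larger and agreement can fail on that edge, so this is not cosmetic). You also flag the $D^{-1/2}$ rescaling needed to transfer the kernel description from $L_{\mathcal{F}}$ to the normalized operator appearing in \cref{eq:epde}, a point the paper silently elides. Two minor corrections: invertibility of $D$ does not follow from the paper's first lemma, which concerns $\hat{L}_{\mathcal{F}}=L_{\mathcal{F}}+\epsilon I$ rather than the block diagonal of $L_{\mathcal{F}}$; it requires the blocks $\sum_{v\trianglelefteq e}\mathcal{F}_{v\trianglelefteq e}^{T}\psi_{vu}\mathcal{F}_{v\trianglelefteq e}$ to be nonsingular and should be listed among your standing regularity assumptions. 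Also, since $\Delta_{\mathcal{F}}$ is symmetric positive semi-definite under your assumptions, the continuous-time limit is indeed a fixed point of the discrete update for any step size $\alpha$ in the stable range, so your closing remark about the discrete rule is fine but deserves that one-line spectral-radius qualification.
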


Due to the condition $\mathcal{F}_{v \trianglelefteq e} x_v = \mathcal{F}_{u \trianglelefteq e} x_u$ at the fixed point, the sheaf diffusion process has limited capacity to separate distinct vertex features between a vertex \(v\) and its neighbors \(u\), as shown in \cref{propsn-2}. 

 

\begin{restatable}[]{proposition}{propsnt} There exists a graph $G=(V,E)$ with the fixed point \(X(t)\) in the sheaf diffusion process 
which satisfies $x_u = x_v$ for at least one $(u,v) \in E$.
\label{propsn-2}
\end{restatable}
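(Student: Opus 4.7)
The plan is to exhibit a small explicit construction that witnesses the existential claim, and then let Proposition~1 (propsn) do all the real work. I would take $G$ to be the graph with two vertices $u,v$ joined by a single edge $e=(u,v)$, equip every stalk ($F_u$, $F_v$, $F_e$) with the same finite-dimensional vector space (e.g.\ $\mathbb{R}^{d}$), and set both restriction maps $\mathcal{F}_{u\trianglelefteq e}$ and $\mathcal{F}_{v\trianglelefteq e}$ equal to the identity. This defines a valid cellular sheaf on $G$, and with identity restrictions the sheaf Laplacian is clearly well-posed, so any positive $\epsilon$ makes $\hat L_{\mathcal{F}}$ positive definite, placing the construction squarely inside the setting in which Proposition~1 applies.

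Applying Proposition~1 to this sheaf with any choice of initial features $X(0)$ yields a unique fixed point $X(t)$ such that $\mathcal{F}_{v\trianglelefteq e}\, x_v = \mathcal{F}_{u\trianglelefteq e}\, x_u$ holds on every edge. On the single edge $e$ of my construction this identity collapses to
\[
  x_v \;=\; x_u,
\]
which is precisely an edge witnessing the condition required by the proposition. Thus a graph $G$ and a fixed point $X(t)$ with $x_u=x_v$ on some $(u,v)\in E$ exist, establishing the claim.

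I do not anticipate a genuine obstacle: the statement is purely existential, and the nontrivial technical input (existence of a fixed point with the per-edge compatibility $\mathcal{F}_{v\trianglelefteq e}x_v=\mathcal{F}_{u\trianglelefteq e}x_u$) is already provided by the preceding proposition. The only minor point worth checking is that the chosen sheaf satisfies the implicit non-degeneracy conditions of the framework (invertibility or positive definiteness requirements that guarantee the diffusion converges), which identity restriction maps trivially satisfy. I would close with a short remark that the argument generalises: on any graph, choosing some edge $e=(u,v)$ with $\mathcal{F}_{u\trianglelefteq e}=\mathcal{F}_{v\trianglelefteq e}$ injective forces $\mathcal{F}_{u\trianglelefteq e}(x_v-x_u)=0$ at the fixed point and hence $x_u=x_v$, irrespective of how restrictions are chosen on the remaining edges.
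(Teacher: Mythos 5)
Your proposal is correct, but it reaches the result by a different (and more elementary) route than the paper. The paper does not build a minimal example from scratch: it invokes Proposition~9 of \citet{bodnar2022neural}, considering connected bipartite graphs $G=(A,B,E)$ with $|A|=|B|$ and symmetric one-dimensional sheaves $\mathcal{H}_{\text{sym}}=\{(F,G): F_{v\trianglelefteq e}=F_{u\trianglelefteq e},\ \det(F_{v\trianglelefteq e})\neq 0\}$, for which that result says the two classes cannot be separated under any initial condition because the fixed point forces $x_u=x_v$ across edges. Your argument instead takes the two-vertex, one-edge graph with identity restriction maps and feeds it into the paper's own preceding proposition (the fixed-point compatibility $\mathcal{F}_{v\trianglelefteq e}x_v=\mathcal{F}_{u\trianglelefteq e}x_u$), which with equal invertible restrictions collapses to $x_u=x_v$; your closing remark that any edge with a shared injective restriction map works is exactly the same mechanism the cited external result exploits. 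What each buys: your construction is self-contained, needs no external citation, and cleanly settles the purely existential statement as written; the paper's appeal to Bodnar et al.\ is heavier machinery for this particular claim but carries extra rhetorical weight for the surrounding separability discussion, since it exhibits a whole family of graphs and sheaves (not just one edge) on which the diffusion provably cannot distinguish two vertex classes. One small caution: your aside that ``any positive $\epsilon$ makes $\hat L_{\mathcal{F}}$ positive definite'' is true here because $L_{\mathcal{F}}$ is positive semidefinite (so $\lambda_{\min}\geq 0$), but it is tangential — the proposition concerns the diffusion with $\Delta_{\mathcal{F}}$ and the fixed-point characterization, and your argument does not actually need the $\epsilon$-shift.
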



Unlike traditional sheaf diffusion, the sheaf reaction diffusion (\cref{eq:spde}) does not require the condition \( \mathcal{F}_{v \trianglelefteq e} x_v = \mathcal{F}_{u \trianglelefteq e} x_u \) to be satisfied upon convergence. The following  proposition demonstrates this.

 \begin{restatable}[]{proposition} {propep}
For any graph $G = (V, E)$ with sheaf Laplacian $L_{\mathcal{F}}$ and initial vertex features $X(0)$, there exists a unique fixed-point \(X(t)\) in the sheaf reaction diffusion process; 
however, \( \mathcal{F}_{v \trianglelefteq e} x_v = \mathcal{F}_{u \trianglelefteq e} x_u \) does not necessarily hold for each $(v,u)\in E$. 
\label{propep}
\end{restatable}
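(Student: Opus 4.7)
The plan is to treat the proposition as two sub-claims — existence and uniqueness of a fixed point of \cref{eq:spde}, followed by a counterexample showing the harmonic condition $\mathcal{F}_{v \trianglelefteq e} x_v = \mathcal{F}_{u \trianglelefteq e} x_u$ is no longer enforced. For existence and uniqueness, I would set $\partial X / \partial t = 0$ in \cref{eq:spde} and rearrange to
\[
\alpha \Delta_{\mathcal{F}} X^{\star} = \beta A(X^{\star}) + \gamma R(X^{\star}, A, S).
\]
The earlier positive-definiteness lemma for $\hat{L}_\mathcal{F}$ guarantees $\Delta_{\mathcal{F}}$ is invertible, so this reduces to a fixed-point equation $X^{\star} = T(X^{\star})$ with $T(X) = (\alpha \Delta_{\mathcal{F}})^{-1}\bigl[\beta A(X) + \gamma R(X, A, S)\bigr]$. \cref{lemma:bound} bounds $A$ and $R$ componentwise by $\|\Phi_v^1\|$ and $\|\Phi_v^2\|$, so $T$ maps a sufficiently large closed ball into itself, and continuity then yields existence via Brouwer's theorem (\cref{lemm:state} independently confirms boundedness of the orbit). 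For uniqueness I would exploit that the scalar map $x \mapsto x/(\kappa + |x|)$ is $1/\kappa$-Lipschitz; pairing this with the energy inner product gives
\[
\alpha \lambda_{\min}(\Delta_{\mathcal{F}}) \|X_1^{\star} - X_2^{\star}\|^2 \leq \Bigl(\tfrac{\beta}{\kappa_{\min}^1} + \tfrac{\gamma}{\kappa_{\min}^2}\Bigr) \|X_1^{\star} - X_2^{\star}\|^2,
\]
which in the parameter regime of interest (with $\alpha$ dominant) forces $X_1^{\star} = X_2^{\star}$.

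For the failure of the harmonic condition I would construct a minimal counterexample: take $G$ to be two vertices $v, u$ joined by a single edge $e$, with one-dimensional stalks and identity transformation maps $\mathcal{F}_{v \trianglelefteq e} = \mathcal{F}_{u \trianglelefteq e} = 1$. Setting $\gamma = 0$ and picking $\Phi_v^1, \kappa_v^1$ so that $A(X_v^{\star})$ is a non-zero constant at the fixed point, the fixed-point equation reduces to $\alpha \Delta_{\mathcal{F}} X_v^{\star} = \beta A(X_v^{\star}) \neq 0$. By the definition of the sheaf Laplacian in \cref{eq:sl}, this forces $\mathcal{F}_{v \trianglelefteq e} x_v - \mathcal{F}_{u \trianglelefteq e} x_u \neq 0$ on the single edge $e$, which is exactly the negation of the harmonic condition. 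The contrast with the earlier proposition on standard sheaf diffusion is illuminating: there, the absence of reaction terms makes the Laplacian vanish at the fixed point, which is precisely what enforces the harmonic condition at every edge.

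The main obstacle I anticipate is uniqueness in full generality: if $\beta, \gamma$ dominate $\alpha \lambda_{\min}(\Delta_{\mathcal{F}})$, the contraction bound above collapses and the model can in principle admit multiple fixed points — which is actually consistent with the oscillatory/non-monotonic behavior that the paper's remark advertises as a desirable feature of the reaction terms. I would therefore likely need either a mild parameter-regime assumption or to replace the contraction argument with a monotonicity-based one (for instance, verifying that the Jacobian $\alpha \Delta_{\mathcal{F}} - \beta A'(\cdot) - \gamma R'(\cdot)$ is globally positive definite, which yields global injectivity via an integral mean-value argument). The counterexample portion, by contrast, should reduce to a direct calculation once the stalk data and coefficients are pinned down.
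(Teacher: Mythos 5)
Your route is genuinely different from the paper's, though it lands in essentially the same place. The paper also sets the time derivative to zero, but then rewrites the stationarity equation as $\alpha\Delta_{\mathcal{F}}X^{\dagger} = \frac{a\cdot X^{\dagger}}{b+X^{\dagger}} + \frac{c\cdot X^{\dagger}}{d+X^{\dagger}}$, divides through by $X^{\dagger}$ (assumed nonzero), and argues that $f(X^{\dagger}) = \frac{a}{b+X^{\dagger}}+\frac{c}{d+X^{\dagger}}$ is continuous and strictly decreasing from $\frac{a}{b}+\frac{c}{d}$ down to $0$, so a unique \emph{non-trivial} fixed point exists exactly when $0 < \alpha\Delta_{\mathcal{F}} < \frac{a}{b}+\frac{c}{d}$. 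In other words, the paper's uniqueness is also conditional on a parameter regime, so the obstacle you flag is not a weakness of your approach relative to the published argument; moreover, the paper treats the matrix-valued quantities as if they were scalars, whereas your Brouwer-ball existence step (using \cref{lemma:bound} and \cref{lemm:state}) plus a contraction or monotone-Jacobian uniqueness step handles the vector structure more carefully. For the second half, the paper gives no explicit counterexample: it simply notes that at a non-trivial fixed point the right-hand side can be nonzero, hence $\alpha\Delta_{\mathcal{F}}X^{\dagger}\neq 0$, so the state need not lie in $H_0(G;\mathcal{F})$; your two-vertex construction proves the same ``does not necessarily hold'' claim more concretely, with one caveat --- when $\gamma=0$ the origin always satisfies the stationarity equation (since $A(0)=0$) and is vacuously harmonic, so you must verify that the fixed point you exhibit is nonzero (e.g., via the range condition above, or by choosing your invariant ball away from the origin) before concluding $\mathcal{F}_{v\trianglelefteq e}x_v \neq \mathcal{F}_{u\trianglelefteq e}x_u$ on the edge.
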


\begin{remark}
The relaxation of the condition \( \mathcal{F}_{v \trianglelefteq e} x_v = \mathcal{F}_{u \trianglelefteq e} x_u \) in \cref{propep} significantly improves the separability of distinct vertex features. This allows sheaf GNN to learn vertex features that are distinguishable from their neighbors through more powerful transformation maps, offering an advantage over existing sheaf diffusion networks \cite{bodnar2022neural}.
\end{remark}

\section{Experiments}

A set of experiments was conducted to evaluate the performance of DeepSN, considering two variants: \emph{DeepSN} and \emph{DeepSN$_{SP}$}, a computationally efficient variant that uses sheaf coefficients to sparsify the graph structure. We focus on three diffusion models: IC, LT, and SIS \cite{li2018influence}. IC and LT are progressive models, while SIS is non-progressive. 

\paragraph{Datasets} We evaluate DeepSN against other methods using a diverse set of datasets, including five real-world datasets (Jazz \cite{rossi2015network}, Network Science \cite{rossi2015network}, Cora-ML \cite{mccallum2000automating}, Power Grid \cite{rossi2015network}, and Digg \cite{lerman2008analysis}) and one synthetic dataset (Random \cite{ling2023deep}), which range from small graphs to those with over 250,000 vertices.

\paragraph{Experimental Setups and Baselines} We follow the experimental setup of \citet{ling2023deep} for our influence maximization task, comparing DeepSN with traditional baselines: IMM \cite{tang2015influence}, OPIM \cite{tang2018online}, SubSIM \cite{guo2020influence} and learning-based methods: IMINFECTOR \cite{panagopoulos2020multi}, ToupleGDD \cite{chen2023touplegdd}, PIANO \cite{li2022piano}, DeepIM \cite{ling2023deep}.

Additional details on dataset statistics, experimental setups, and model hyperparameters are in the appendix.


\subsubsection{Exp--1. Performance of DeepSN~} We evaluate DeepSN's performance in selecting an optimal seed set for the IM task across various budget constraints $\{1\%,5\%,10\%, 20\%\}$ (i.e. seed set size as a percentage of total number of vertices). The results are demonstrated in  \cref{tab:im_results}. We observe that both DeepSN and DeepSN$_{SP}$ outperform or deliver comparable performance across all diffusion models. Notably, DeepSN achieves a substantial improvement over all baseline methods for the SIS diffusion model. This enhanced performance is attributed to DeepSN’s capability to effectively capture complex diffusion dynamics inherent to non-progressive diffusion models. The experimental results for more datasets are provided in the appendix.

\subsubsection{Exp--2. Ablation Study: Sheaf GNN}
In Figure~\ref{fig:estimation}, we compare the influence estimation performance of DeepSN with several widely used GNNs. The results show that DeepSN significantly outperforms traditional GNNs, such as GCN \cite{kipf2016semi}, GAT \cite{velivckovic2018graph}, and GraphSAGE \cite{hamilton2017inductive}, across all diffusion models and datasets, highlighting its superior effectiveness in influence estimation. These traditional GNNs often struggle to capture long-range dependencies due to oversmoothing, leading to lower performance. Furthermore, while existing sheaf neural networks such as DSN~\cite{bodnar2022neural} address some of these issues, they still fall short in capturing the intricate dynamics required for accurate influence estimation, resulting in suboptimal performance.

\subsubsection{Exp--3. Ablation Study: IM Model}
We evaluate the effectiveness of our IM model $\mathcal{T}_{\phi}$ by replacing it with different IM variants. These variants include: \textbf{DeepSN-CELF}, which incorporates the CELF algorithm \cite{leskovec2007cost} as the IM component; \textbf{DeepSN-WC}, where seed vertices are optimized across the entire network as a single subgraph; and \textbf{DeepSN-WSA}, which uses the adjacency matrix for dividing the graph into subgraphs, instead of sheaf coefficients. The results in Table \ref{tab:im_ablation} show that our IM model outperforms other algorithms. Particularly, DeepSN and DeepSN$_{SP}$ consistently exceed the performance of DeepSN-WC and DeepSN-WSA. This is largely due to  subgraph-based seed optimization and sheaf coefficients.


\begin{table}[ht]
    \centering
    \small 
    \begin{adjustbox}{max width=0.45\textwidth}
    \begin{tabular}{lcccccc}
        \toprule
        & \multicolumn{2}{c}{IC} & \multicolumn{2}{c}{LT} & \multicolumn{2}{c}{SIS} \\
        \cmidrule(lr){2-3} \cmidrule(lr){4-5} \cmidrule(lr){6-7}
        Methods & 10\% & 20\% &  10\% & 20\% &10\% & 20\% \\
        \midrule
        DeepSN-CELF & 37.2 & 52.8  & \textbf{76.0} & 88.6 & 30.6 & 38.7 \\
        DeepSN-WC & 39.2 & 50.4  & 46.3 & 88.6 & 25.3 & 35.9\\
        DeepSN-WSA & 40.0 & 50.6 & 48.0 &  89.6 & 26.0 & 36.0\\
        \hline
        DeepSN & 40.9 & 52.8 & 68.2 & \textbf{95.3} & 34.2 & 46.5 \\
        DeepSN$_{SP}$ & \textbf{42.3} & \textbf{58.4} & 72.8 & 93.8 & \textbf{37.9} & \textbf{46.9} \\
        \bottomrule
    \end{tabular}
    \end{adjustbox}
    \caption{Comparison of DeepSN with various influence maximization approaches for Cora-ML dataset.}
    \label{tab:im_ablation}
\end{table}

\subsubsection{Exp--4. Impact of Layer Depth and Feature Dimension}
We explore the effect of layer depth and feature dimension on the performance of DeepSN. Figure~\ref{fig:depth_and_sd} illustrates how these factors influence the performance of DeepSN in selecting optimal seed vertices for Cora ML dataset, evaluated under the IC diffusion model with a 10\% seed set.

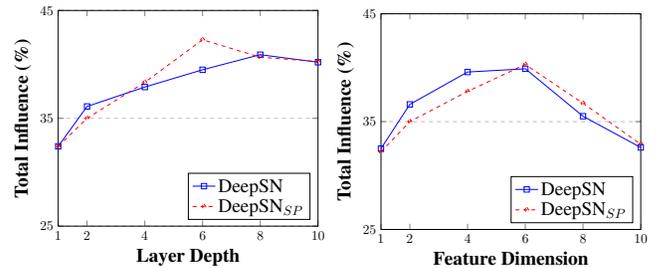
\begin{figure}[ht]
    \centering
    \begin{minipage}[b]{0.49\linewidth}
        \centering
        \resizebox{1.05\linewidth}{!}{
        \begin{tikzpicture}
            \begin{axis}[    
                xlabel={\LARGE Layer Depth},    
                ylabel={\LARGE Total Influence (\%)},
                legend style={font=\LARGE,
                legend cell align=left,
                legend columns=1},
                xlabel style={font=\normalsize},
                ylabel style={font=\normalsize},
                xmin=1, xmax=10,    
                ymin=25, ymax=45,    
                xtick={1, 2,4,6,8,10},    
                ytick={25, 35, 45},    
                legend pos=south east,    
                ymajorgrids=true,    
                grid style=dashed,
                ylabel style={font=\bfseries},
                xlabel style={font=\bfseries}
            ]
            \addplot[    
                color=blue,    
                mark=square,    
                ]
                coordinates {(1, 32.4)(2, 36.1)(4, 37.9)(6, 39.5)(8, 40.9)(10, 40.2)
                };
            \addlegendentry{DeepSN}
            
            \addplot[    
                color=red,    
                mark=diamond, 
                style=dashed,
                ]
                coordinates {
                (1, 32.4)(2, 35.0)(4, 38.34)(6, 42.28)(8, 40.67)(10, 40.26)
                };
            \addlegendentry{DeepSN$_{SP}$}
            \end{axis}
        \end{tikzpicture}
        }
    \end{minipage}\hfill
    \begin{minipage}[b]{0.49\linewidth}
        \centering
        \resizebox{1.05\linewidth}{!}{
                \begin{tikzpicture}
            \begin{axis}[    
                xlabel={\LARGE Feature Dimension},    
                ylabel={\LARGE Total Influence (\%)},
                legend style={font=\LARGE,
                legend cell align=left,
                legend columns=1},
                xmin=1, xmax=10,    
                ymin=25, ymax=45,    
                xtick={1, 2,4,6,8,10},    
                ytick={25,35,45},    
                legend pos=south east,    
                ymajorgrids=true,    
                grid style=dashed,
                ylabel style={font=\bfseries},
                xlabel style={font=\bfseries}
            ]
            \addplot[    
                color=blue,    
                mark=square,    
                ]
                coordinates {(1, 32.5)(2, 36.6)(4, 39.6)(6, 39.9)(8, 35.5)(10, 32.6)
                };
            \addlegendentry{DeepSN}
            
            \addplot[    
                color=red,    
                mark=diamond, 
                style=dashed,
                ]
                coordinates {
                (1, 32.2)(2, 35.0)(4, 37.8)(6, 40.3)(8, 36.7)(10, 32.9)
                };
            \addlegendentry{DeepSN$_{SP}$}
            \end{axis}
        \end{tikzpicture}
        }
    \end{minipage}
    \caption{Impact of layer depth and feature dimension on DeepSN's performance}
    \label{fig:depth_and_sd}
\end{figure}

As expected, DeepSN's performance improves with greater layer depth, effectively capturing long-range interactions and demonstrating robustness against oversmoothing. However, performance only improves with increasing feature dimension up to a certain point. Beyond this, the added complexity, such as more learnable parameters, begins to outweigh the benefits of enhanced representational power, leading to a decline in performance.

In the appendix, we provide an additional ablation study on reaction operators in DeepSN.


\section{Conclusion, Limitations and Future Work}
In this work, we proposed a novel learning framework for the IM problem. Our approach integrates a GNN that harnesses sheaf theory to learn the underlying influence diffusion model in a data-driven manner, while effectively addressing the topological and dynamic complexities of propagation phenomena. Additionally, we proposed a subgraph-based maximization objective to identify the optimal seed set, thereby reducing the combinatorial search space inherent to the IM problem. The empirical results demonstrated the effectiveness of the proposed framework.

Currently, our framework only supports diffusion models with two states. A potential avenue for future work is to extend the framework to handle more complex diffusion models with more than two states, including multi-state threshold models and epidemic models with multiple stages.

\section{Acknowledgments}
This research was supported partially by the Australian Government through the Australian Research Council's Discovery Projects funding scheme (project DP210102273).

\renewcommand{\thesection}{\Alph{section}}
\setcounter{section}{0}
\section*{Appendix}

\begin{table}[ht]
\centering
\caption{Summary of Dataset Statistics}
\begin{tabular}{lccc}
\toprule
\textbf{Dataset}       & \textbf{\# of Vertices}      & \textbf{\# of Edges}       \\
\midrule
Jazz          & 198     & 2,742      \\
Network Science       & 1,565      & 13,532     \\
Cora-ML        & 2,810        &  7,981     \\
Power Grid     & 4,941      & 6,594      \\
Random      & 50,000     & 250,000    \\
Digg           & 279,613    & 1,170,689  \\
\bottomrule
\end{tabular}
\label{tab:dataset_summary}
\end{table}

\section{Experimental Details} \label{sec:exp_det}
\

This section provides details related to our experiments, including dataset statistics, experimental setups, and model hyper-parameters.

\subsection{Datasets} \label{sec:datasets}
We use six datasets in our experiments, comprising five real-world datasets and one synthetic dataset. A summary of these datasets is provided below.
\begin{itemize}
    \item \textit{Jazz \cite{rossi2015network}}: The dataset depicts a social network of jazz musicians, with vertices representing individual musicians and edges denoting collaborations or interactions among them.
    \item \textit{Network Science \cite{rossi2015network}}: This dataset represents a co-authorship network of scientists in the field of network theory. In this dataset, vertices represent individual scientists, and edges indicate collaborations between pairs of scientists.
    \item \textit{Cora-ML \cite{mccallum2000automating}}: The Cora-ML dataset is a citation network in which vertices correspond to scientific papers and edges represent the citation relationships between them.
    \item \textit{Power Grid \cite{rossi2015network}}: The Power Grid dataset represents the configuration of electrical power grids, with vertices indicating power stations or substations and edges representing the transmission lines that link these stations.
    \item \textit{Random \cite{ling2023deep}}: A  synthetic random graph generated using Erd\H{o}s–R\'enyi model \cite{erdds1959random}.
     \item \textit{Digg \cite{lerman2008analysis}}: A dataset sourced from a popular social news website. Each user on the platform is represented as a vertex, with edges indicating user-user relationships (such as friendships or follows) and user-item interactions (such as votes or comments on stories).
\end{itemize}
\cref{tab:dataset_summary} summarizes statistics of these datasets.

\subsection{Experimental Setups}\label{sec:inf_est}

For influence maximization, we adopt the experimental setup outlined by \citet{ling2023deep} for our influence maximization tasks, selecting 1\%, 5\%, 10\%, and 20\% of the vertices in each dataset as seed nodes, simulating each diffusion model until the diffusion process halts, or converges to a steady-state and recording the average influence spread over 100 repetitions. Baseline results are taken from \protect \citet{ling2023deep}.

To evaluate the influence estimation performance in DeepSN, we adhere to the training-testing-validation process outlined by \citet{vabalas2019machine}, splitting the dataset into 60\% for training, 20\% for testing, and 20\% for validation.  For the baseline methods, we use the hyper-parameters reported in their original papers, and report the results.

\subsection{Hyper-parameters} 

In our experiments, we search the hyper-parameters of DeepSN within the following ranges: the number of GNN layers $\in \{2, 5, 10\}$, the dimension $d \in \{1, 2\}$, dropout rate $\in \{0.1, 0.2, 0.5, 0.9\}$, learning rate $\in \{0.001, 0.002, 0.004\}$, batch size $\in \{2, 8, 16, 32\}$, the number of hidden units in the MLP $\in \{32, 64, 128\}$, and the resolution parameter of the Louvain algorithm $\in \{0.1, 1, 2\}$. We employ the Adam algorithm as the optimizer \cite{kingma2014adam}. Further, DeepSN$_{SP}$ employs a threshold of 0.5 to convert continuous sheaf coefficients into binary values. 

\subsection{Computational Resources} \label{sec:computation-resources}

All experiments were performed on a Linux server equipped with an Intel Xeon W-2175 2.50GHz processor with 28 cores, an NVIDIA RTX A6000 GPU, and 512GB of main memory.

\subsection{Additional Experimental Results}

In this section, we provide additional experimental results on the performance of DeepSN.

\subsubsection{Exp--1. Performance of DeepSN}

We present the results of influence maximization performance for the DeepSN models across additional datasets in \cref{tab:combined_results_2}. We observe that both variants of DeepSN either surpass or match the performance of other methods in the influence maximization task for both diffusion models. 

\begin{table*}[htbp]
    \centering
    \begin{adjustbox}{width=\textwidth}
    \begin{tabular}{lcccccccccccc|cccccccccccc}
        \toprule
        & \multicolumn{4}{c}{Jazz (IC)} & \multicolumn{4}{c}{Random (IC)} & \multicolumn{4}{c}{Digg (IC)} & \multicolumn{4}{c}{Jazz (LT)} & \multicolumn{4}{c}{Random (LT)} & \multicolumn{4}{c}{Digg (LT)} \\
        \cmidrule(lr){2-5} \cmidrule(lr){6-9} \cmidrule(lr){10-13}
        \cmidrule(lr){14-17} \cmidrule(lr){18-21} \cmidrule(lr){22-25}
        Methods & 1\% & 5\% & 10\% & 20\% & 1\% & 5\% & 10\% & 20\% & 1\% & 5\% & 10\% & 20\% & 1\% & 5\% & 10\% & 20\% & 1\% & 5\% & 10\% & 20\% & 1\% & 5\% & 10\% & 20\%\\
        \midrule
        IMM & 2.6 & 20.1 & 31.4 & 42.8 & 9.2 & 26.2 & 36.3 & 51.6 & 7.4 & 18.6 & 32.8 & 49.6 & 1.4 & 5.7 & 13.4 & 24.5 & 1.1 & 5.2 & 13.1 & 66.9 & 2.4 & 10.8 & 37.4 & 55.6\\
        OPIM & 2.4 & 20.1 & 34.4 & 46.8 & 9.6 & 25.3 & 36.6 & 51.7 & 7.6 & 18.5 & 32.9 & 48.9 & 1.4 & 6.9 & 12.6 & 20.9 & 1.3 & 5.4 & 12.6 & 62.1 & 2.1 & 11.3 & 38.2 & 57.1 \\
        SubSIM & 3.6 & 18.8 & 37.6 & 44.7 & 9.5 & 26.7 & 36.5 & 51.3 & 7.5 & 18.9 & 33.3 & 49.4 & 1.4 & 5.9 & 11.4 & 21.2 & 1.4 & 5.5 & 13.1 & 69.6 & 2.4 & 11.3 & 37.9 & 56.9\\
        \hline
        IMINfECTOR & 3.6 & 19.7 & 37.5 & 45.9 & 9.1 & 26.2 & 36.1 & 51.3 & 7.9 & 18.6 & 33.5 & 49.8 & 1.4 & 6.2 & 13.5 & 22.8 & 1.3 & 5.2 & 12.9 & 67.4 & 2.2 & 11.1 & 38.9 & 58.7 \\
        PIANO & 2.2 & 19.2 & 36.6 & 43.2 & 9.2 & 25.6 & 36.5 & 51.6 & 7.6 & 18.3 & 33.6 & 49.5 & 1.1 & 6.2 & 12.2 & 22.4 & 1.2 & 5.2 & 12.8 & 67.4 & - & - & - & - \\
        ToupleGDD & 3.3 & 20.4 & 37.2 & 45.7 & 9.5 & 26.8 & 37.1 & 51.4 & - & - & - & - & 1.4 & 6.5 & 12.9 & 23.6 & 1.3 & 5.5 & 13.4 & 70.2 & - & - & - & - \\
        \hline
        DeepIM & 4.9 & 23.3 & 41.5 & 49.9 & \textbf{11.6} & 27.4 & 38.7 & 52.1 & 8.4 & 19.3 & 34.2 & 51.3 & 1.9 & 6.5 & \textbf{16.4} & 99.1 & 1.5 & \textbf{6.5} & \textbf{15.5} & 99.9 & \textbf{3.5} & 15.9 & 41.3 & \textbf{76.2}\\
        \hline
        DeepSN & 8.5 & 26.9 & 41.6 & 53.8 & 10.6 & \textbf{27.8} & \textbf{38.8} & \textbf{52.8} & 8.9 & \textbf{19.5} & 35.2 & \textbf{52.8} & \textbf{2.0} & \textbf{6.7} & 14.9 & 96.9 & \textbf{1.6} & 5.8 & 13.5 & \textbf{99.9} & 3.2 & \textbf{16.1} & \textbf{41.7} & 72.1\\
        DeepSN$_{SP}$ & \textbf{8.8} & \textbf{29.9} & \textbf{41.9} & \textbf{55.6} & 9.8 & 27.2 & 37.6 & 51.6 & \textbf{9.1} & 19.2 & \textbf{35.4} & 51.9 & 1.5 & 5.5 & 14.8 & \textbf{99.5} & 1.3 & 6.2 & 13.3 & 99.9 & 3.3 & 15.9 & 41.2 & 71.6\\
        \bottomrule
    \end{tabular}
    \end{adjustbox}
    \caption{Performance comparison under IC and LT diffusion models. - indicates out-of-memory error. The best results are highlighted in \textbf{bold}.}
    \label{tab:combined_results_2}
\end{table*}

\subsubsection{Exp--2. Ablation Study: Sheaf GNN}

\cref{fig:estimation_ic} demonstrates the performance of Sheaf GNN in estimating influence within the IC diffusion model. Consistent with its superior performance observed in the LT and SIS models, Sheaf GNN continues to outperform all baseline methods.

\subsubsection{Ablation Study: Reaction Operators}

 Table \ref{tab:ablation_study_reaction_terms} shows the impact of different reaction term configurations on DeepSN's performance. The table presents the average total influence(\%) for the Cora-ML dataset across various seed set percentages. While each component offers improvements, the combined approach outperforms all other configurations.

\begin{table}[ht]
    \centering
    \small 
    \begin{adjustbox}{max width=0.45\textwidth}
    \begin{tabular}{lcccc}
        \toprule
        Methods & 1\% & 5\% & 10\% & 20\% \\
        \midrule
        Without Reaction Components & 9.7 & 20.5 & 32.8 & 49.8 \\
        Only Pointwise Dynamics & 10.9 & 23.2 & 34.0 & 51.0 \\
        Only Coupled Dynamics & 10.7 & 22.4 & 37.9 & 50.7 \\
        With Reaction Components & \textbf{11.5} & \textbf{25.6} & \textbf{40.9} & \textbf{52.8} \\
        \bottomrule
    \end{tabular}
    \end{adjustbox}
    \caption{Impact of reaction terms on DeepSN performance. }
    \label{tab:ablation_study_reaction_terms}
\end{table}

\begin{filecontents*}{jazz_ic.dat}
SeedSetSize GCN GAT GraphSAGE DSN DeepSN
1 0.235 0.255 0.234 0.169 0.057
5 0.172 0.278 0.171 0.153 0.073
10 0.198 0.26 0.197 0.195 0.083
20 0.251 0.303 0.224 0.232 0.117
\end{filecontents*}

\begin{filecontents*}{cora_ml_ic.dat}
SeedSetSize GCN GAT GraphSAGE DSN DeepSN
1 0.21 0.262 0.21 0.164 0.108
5 0.215 0.29 0.215 0.205 0.092
10 0.26 0.283 0.243 0.249 0.123
20 0.305 0.34 0.279 0.291 0.18
\end{filecontents*}

\begin{filecontents*}{netscience_ic.dat}
SeedSetSize GCN GAT GraphSAGE DSN DeepSN
1 0.26 0.256 0.259 0.109 0.068
5 0.272 0.283 0.272 0.226 0.08
10 0.302 0.312 0.296 0.268 0.122
20 0.311 0.362 0.247 0.311 0.175
\end{filecontents*}

\begin{filecontents*}{power_grid_ic.dat}
SeedSetSize GCN GAT GraphSAGE DSN DeepSN
1 0.256 0.259 0.256 0.106 0.079
5 0.247 0.295 0.246 0.113 0.107
10 0.273 0.325 0.27 0.275 0.162
20 0.314 0.353 0.316 0.303 0.197
\end{filecontents*}

\begin{filecontents*}{random5_ic.dat}
SeedSetSize GCN GAT GraphSAGE DSN DeepSN
1 0.320 0.305 0.319 0.190 0.088 
5 0.396 0.488 0.395 0.263 0.160
10 0.473 0.506 0.391 0.307 0.204
20 0.470 0.499 0.390 0.317 0.214
\end{filecontents*}

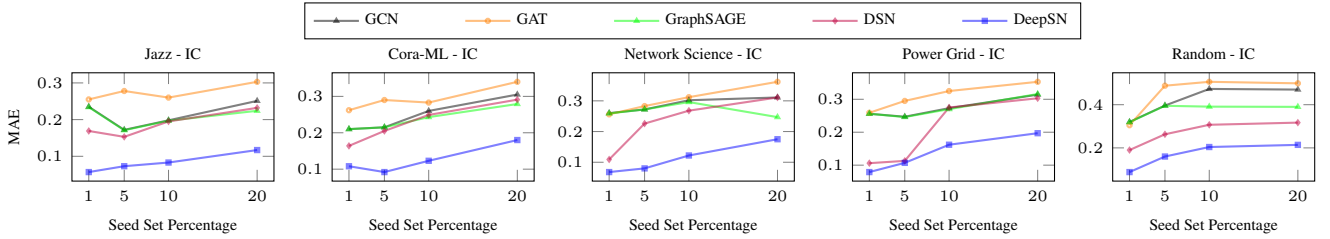
\begin{figure*}[htbp]
    \centering
\begin{tikzpicture}
    \begin{groupplot}[
        group style={
            group name=my plots,
            group size=5 by 1     ,
            xlabels at=edge bottom,
            ylabels at=edge left,
            horizontal sep=22pt,
            vertical sep=20pt
        },
        width=0.24\textwidth,
        height=0.17\textwidth,
        xlabel={Seed Set Percentage},
        ylabel={MAE},
        xtick=data,
        xticklabel style={rotate=0, anchor=north},
        legend style={at={(1.15, 1.65)}, anchor=north west, legend columns=6, font=\tiny, /tikz/every even column/.append style={column sep=0.8cm}},
        cycle list name=color list,  
        every axis plot/.append style={thick},
        title style={yshift=-1.0ex, font=\tiny},
        xlabel style={font=\tiny},
        ylabel style={font=\tiny},
        tick label style={font=\tiny}
    ]
    
    \nextgroupplot[title={Jazz - IC}]
    \addplot[black, mark=triangle, mark size=0.8 pt, opacity = 0.5]  table[x=SeedSetSize,y=GCN] {jazz_ic.dat};
    \addlegendentry{GCN}
    \addplot[orange, mark=o, mark size=0.8 pt, opacity = 0.5] table[x=SeedSetSize,y=GAT] {jazz_ic.dat};
    \addlegendentry{GAT}
    \addplot[green, mark=triangle*, mark size=0.8 pt, opacity = 0.5] table[x=SeedSetSize,y=GraphSAGE] {jazz_ic.dat};
    \addlegendentry{GraphSAGE}
    \addplot[purple, mark=diamond*, mark size=0.8 pt, opacity = 0.5] table[x=SeedSetSize,y=DSN] {jazz_ic.dat};
    \addlegendentry{DSN}
    \addplot[blue, mark=square*, mark size=0.8 pt, opacity = 0.5] table[x=SeedSetSize,y=DeepSN] {jazz_ic.dat};
    \addlegendentry{DeepSN}

    \nextgroupplot [title={Cora-ML - IC}]
    \addplot[black, mark=triangle, mark size=0.8 pt, opacity = 0.5] table[x=SeedSetSize,y=GCN] {cora_ml_ic.dat};
    \addplot[orange, mark=o, mark size=0.8 pt, opacity = 0.5] table[x=SeedSetSize,y=GAT] {cora_ml_ic.dat};
    \addplot[green, mark=triangle*, mark size=0.8 pt, opacity = 0.5] table[x=SeedSetSize,y=GraphSAGE] {cora_ml_ic.dat};
    \addplot[purple, mark=diamond*, mark size=0.8 pt, opacity = 0.5] table[x=SeedSetSize,y=DSN] {cora_ml_ic.dat};
    \addplot[blue, mark=square*, mark size=0.8 pt, opacity = 0.5] table[x=SeedSetSize,y=DeepSN] {cora_ml_ic.dat};

    \nextgroupplot[title={Network Science - IC}]
    \addplot[black, mark=triangle, mark size=0.8 pt, opacity = 0.5] table[x=SeedSetSize,y=GCN] {netscience_ic.dat}; 
    \addplot[orange, mark=o, mark size=0.8 pt, opacity = 0.5] table[x=SeedSetSize,y=GAT] {netscience_ic.dat}; 
    \addplot[green, mark=triangle*, mark size=0.8 pt, opacity = 0.5] table[x=SeedSetSize,y=GraphSAGE] {netscience_ic.dat}; 
    \addplot[purple, mark=diamond*, mark size=0.8 pt, opacity = 0.5] table[x=SeedSetSize,y=DSN] {netscience_ic.dat}; 
    \addplot[blue, mark=square*, mark size=0.8 pt, opacity = 0.5] table[x=SeedSetSize,y=DeepSN] {netscience_ic.dat}; 

    \nextgroupplot[title={Power Grid - IC}]
    \addplot[black, mark=triangle, mark size=0.8 pt, opacity = 0.5] table[x=SeedSetSize,y=GCN] {power_grid_ic.dat}; 
    \addplot[orange, mark=o, mark size=0.8 pt, opacity = 0.5] table[x=SeedSetSize,y=GAT] {power_grid_ic.dat}; 
    \addplot[green, mark=triangle*, mark size=0.8 pt, opacity = 0.5] table[x=SeedSetSize,y=GraphSAGE] {power_grid_ic.dat}; 
    \addplot[purple, mark=diamond*, mark size=0.8 pt, opacity = 0.5] table[x=SeedSetSize,y=DSN] {power_grid_ic.dat}; 
    \addplot[blue, mark=square*, mark size=0.8 pt, opacity = 0.5] table[x=SeedSetSize,y=DeepSN] {power_grid_ic.dat}; 

    \nextgroupplot[title={Random - IC}]
    \addplot[black, mark=triangle, mark size=0.8 pt, opacity = 0.5] table[x=SeedSetSize,y=GCN] {random5_ic.dat}; 
    \addplot[orange, mark=o, mark size=0.8 pt, opacity = 0.5] table[x=SeedSetSize,y=GAT] {random5_ic.dat}; 
    \addplot[green, mark=triangle*, mark size=0.8 pt, opacity = 0.5] table[x=SeedSetSize,y=GraphSAGE] {random5_ic.dat}; 
    \addplot[purple, mark=diamond*, mark size=0.8 pt, opacity = 0.5] table[x=SeedSetSize,y=DSN] {random5_ic.dat}; 
    \addplot[blue, mark=square*, mark size=0.8 pt, opacity = 0.5] table[x=SeedSetSize,y=DeepSN] {random5_ic.dat}; 
    \end{groupplot}
\end{tikzpicture}
\caption{Performance of DeepSN for influence estimation  under IC model.}
\label{fig:estimation_ic}
\end{figure*}

\section{Proofs}

In the following, we provide the proofs of the lemmas and theorems presented in the main content.

\lemmps*
\begin{proof}
Assume \( \hat{L}_\mathcal{F} = L_\mathcal{F} + \epsilon I \) is positive definite. This implies all eigenvalues \( \mu_i = \lambda_i + \epsilon > 0 \), where \( \lambda_i \) are eigenvalues of \( L_\mathcal{F} \). The smallest eigenvalue, \( \lambda_{\text{min}} \), sets the strictest condition: \( \lambda_{\text{min}} + \epsilon > 0 \), or \( \epsilon > -\lambda_{\text{min}} \). Conversely, if \( \epsilon > -\lambda_{\text{min}} \), then \( \mu_i = \lambda_i + \epsilon > 0 \) for all eigenvalues \( \lambda_i \) of \( L_\mathcal{F} \). Therefore, \( \hat{L}_\mathcal{F} \) is positive definite. The proof is complete.

\end{proof}

\lemmbound*
\begin{proof}
Let \( X_v(t) = (X^1_v(t), \dots, X^i_v(t), \dots, X^f_v(t)) \). Then, \( A_v(X(t)) \) can be expressed as:
\[
A_v(X(t)) = \left( \begin{array}{c}
\frac{\Phi_{v,1}^1 X^1_v(t)}{\kappa_{v,1}^1 + |X^1_v(t)|}, \\
\vdots \\
\frac{\Phi_{v,i}^1 X^i_v(t)}{\kappa_{v,i}^1 + |X^i_v(t)|}, \\
\vdots \\
\frac{\Phi_{v,n}^1 X^n_v(t)}{\kappa_{v,n}^1 + |X^n_v(t)|}
\end{array} \right)
\]
As \( X^i_v(t) \rightarrow 0 \):
\[ \lim_{X^i_v(t) \to 0} \frac{\Phi_{v,i}^1 X^i_v(t)}{\kappa_{v,i}^1 + |X^i_v(t)|} = \frac{\Phi_{v,i}^1 \cdot 0}{\kappa_{v,i}^1 + 0} = 0 \]

As \( X^i_v(t) \rightarrow \infty \):
\[ \lim_{X^i_v(t) \to \infty} \frac{\Phi_{v,i}^1 X^i_v(t)}{\kappa_{v,i}^1 + |X^i_v(t)|} = \frac{\Phi_{v,i}^1}{1 + \frac{\kappa_{v,i}^1}{X^i_v(t)}} = \Phi_{v,i}^1 \]

As \( X^i_v(t) \rightarrow -\infty \):
\[ \lim_{X^i_v(t) \to -\infty} \frac{\Phi_{v,i}^1 X^i_v(t)}{\kappa_{v,i}^1 + |X^i_v(t)|} = \frac{\Phi_{v,i}^1}{-1 + \frac{\kappa_{v,i}^1}{X^i_v(t)}} = -\Phi_{v,i}^1 \]

Further, $\frac{ \Phi_{v,i}^1 X^i_v(t)}{\kappa_{v,i}^1 + |X^i_v(t)|} < \Phi_{v,i}^1$ always holds since $\kappa_{v,i}^1 > 0$ and $\kappa_{v,i}^1 \neq -|X^i_v(t)|$. Therefore, the values of \( A_v(X(t)) \) are bounded by the corresponding values of \( \Phi_v^1 \). Applying the same reasoning, we can prove that the values of \( R_v\left(X(t), A, S^t\right) \) are bounded by the corresponding values of \( \Phi_v^2 \). Consequently, the norms of \( A_v(X(t)) \) and \( R_v\left(X(t), A, S^t\right) \) are also bounded by the norms of \( \Phi_v^1 \) and \( \Phi_v^2 \), respectively. This completes the proof.

\end{proof}

\lemmstate*

\begin{proof}
Let us consider the diffusion PDE:

\[
\frac{\partial X(t)}{\partial t} = - \alpha \Delta_{\mathcal{F}} X(t) + \beta A\left(X(t) \right) + \gamma R\left(X(t), A, S^t\right)
\]

Assuming the system reaches a fixed point, we have \( \frac{\partial X(t)}{\partial t} = 0 \) . At this point, the equation simplifies to:

\[
\alpha \Delta_{\mathcal{F}} X^{\dagger} = \beta A\left(X^{\dagger} \right) + \gamma R\left(X^{\dagger}, A, S^t\right)
\]

According to Lemma \ref{lemma:bound}, the terms \( A(X(t)) \) and \( R(X(t), A, S^t) \) are bounded between \( -\Phi_1 \) and \( \Phi_1 \), and \( -\Phi_2 \) and \( \Phi_2 \), respectively, where \( \Phi_1 \) and \( \Phi_2 \) are matrices constructed by stacking the vectors \( \Phi_v^1 \) and \( \Phi_v^2 \) as rows. 

First, we derive an upper bound for the L2 norm of $X^\dagger$.

\[
\|\alpha \Delta_{\mathcal{F}} X^\dagger\|_2 \leq |\beta| \cdot \|\Phi_1\|_2 + |\gamma| \cdot \|\Phi_2\|_2
\]

Rearranging by factoring out \( |\alpha| \) gives:

\[
|\alpha| \|\Delta_{\mathcal{F}} X^\dagger\|_2 \leq |\beta| \cdot \|\Phi_1\|_2 + |\gamma| \cdot \|\Phi_2\|_2
\]

Dividing through by \( |\alpha| \) (assuming \( \alpha \neq 0 \)) leads to:

\[
\|\Delta_{\mathcal{F}} X^\dagger\|_2 \leq \frac{|\beta| \cdot \|\Phi_1\|_2 + |\gamma| \cdot \|\Phi_2\|_2}{|\alpha|}.
\]

Given that \( \Delta_{\mathcal{F}} \) is positive definite, it is symmetric and invertible, with all eigenvalues strictly positive. The largest eigenvalue is denoted as \( \lambda_{\max} \), implying that the \( ||\Delta_{\mathcal{F}}||_2 \) equal to \( \lambda_{\max} \). Combining these facts, the norm of \( X^\dagger \) can be bounded by:

\[
\|X^\dagger\|_2 \leq \frac{1}{|\alpha| \lambda_{\max}} (|\beta| \cdot \|\Phi_1\|_2 + |\gamma| \cdot \|\Phi_2\|_2).
\]

\end{proof}

\propsn* 

\begin{proof}

Let us consider the diffusion PDE given in Eq. \eqref{eq:epde}:
\begin{equation*}
  \begin{aligned}
X(0) = X,\ \frac{\partial X(t)}{\partial t} = - \Delta_{\mathcal{F}} X(t)
\end{aligned} 
\label{eq:pde}
\end{equation*}

 In accordance with Theorem 2.2 in \protect\citet{hansen2021opinion},  \( X \) converges to 
 \(H_0(G; \mathcal{F})\) in the fixed point, where \(H_0(G; \mathcal{F}) = \{x \in C_0(G; \mathcal{F}) \mid \mathcal{F}_{u \trianglelefteq e} x_u = \mathcal{F}_{v \trianglelefteq e} x_v \}\). 
This implies that, for any adjacent vertices \( v \) and \( u \), the following holds:
\[
\mathcal{F}_{v \trianglelefteq e} x_v = \mathcal{F}_{u \trianglelefteq e} x_u.
\]
\end{proof}

\propsnt*
\begin{proof}
We employ Proposition 9 from \citet{bodnar2022neural} to prove this proposition. Consider a set of connected bipartite graphs \( G = (A, B, E) \), with partitions \( A \) and \( B \) forming two distinct classes, where \( |A| = |B| \). According to their proposition, restriction maps defined by symmetric sheaves of dimension 1, specifically

\[
\mathcal{H}_{\text{sym}} := \{ (F, G) : F_{v \trianglelefteq e} = F_{u \trianglelefteq e}, \, \det(F_{v \trianglelefteq e}) \neq 0 \},
\]

\noindent are incapable of separating the classes of any graph in \( G \) for any initial conditions \( X(0) \). This is due to the fact that, for adjacent vertices \( u \) and \( v \), we have \( x_v = x_u \), which prevents differentiation between the classes.

\end{proof}

\propep*

\begin{proof}

We start with the reaction diffusion equation:

\begin{equation*}
    X(t+1) = X(t) -  \frac{\partial X(t)}{\partial t}
\end{equation*}

where, 

\begin{equation*}
 \frac{\partial X(t)}{\partial t} = - \alpha \Delta_{\mathcal{F}} X(t) + \beta A\left(X(t) \right) 
 + \gamma R\left(X(t), A, S^t\right)
\end{equation*}

When the system reaches a fixed point, we have \( \frac{\partial X_(t)}{\partial t} = 0 \). At this point, the equation simplifies to:

\[
\alpha \Delta_{\mathcal{F}} X^{\dagger} = \beta A\left(X^{\dagger} \right) + \gamma R\left(X^{\dagger}, A, S^t\right)
\]

We can write this as:

\[
\alpha \Delta_{\mathcal{F}} X^{\dagger} = \frac{a \cdot X^{\dagger}}{b + X^{\dagger}}  + \frac{c \cdot X^{\dagger}}{d + X^{\dagger}}
\]

\noindent where $a,b,c,d$ are matrices with same dimension as $\Delta_{\mathcal{F}}$. First, we analyze the existence of a non-trivial fixed point. To do this, we can simplify the equation by dividing both sides by \(X^{\dagger}\), assuming \(X^{\dagger} \neq 0\). This gives us the equation \(\alpha \Delta_{\mathcal{F}} = f(X^{\dagger})\), where
\[
f(X^{\dagger}) = \frac{a}{b + X^{\dagger}} + \frac{c}{d + X^{\dagger}}.
\]

The function \(f(X^{\dagger})\) is continuous and strictly decreasing. This is because as \(X^{\dagger}\) increases, the terms \(\frac{a}{b + X^{\dagger}}\) and \(\frac{c}{d + X^{\dagger}}\) both decrease, making the entire function \(f(X^{\dagger})\) decrease. Specifically, when \(X^{\dagger} = 0\), the function \(f(X^{\dagger})\) takes its maximum value, \(\frac{a}{b} + \frac{c}{d}\). As \(X^{\dagger}\) increases towards infinity, the function \(f(X^{\dagger})\) decreases toward \(0\). For a non-trivial fixed point \(X^{\dagger} > 0\) to exist, the value \(\alpha \Delta_{\mathcal{F}}\) must fall within the range of values that \(f(X^{\dagger})\) can take. This means \(\alpha \Delta_{\mathcal{F}}\) must satisfy \(0 < \alpha \Delta_{\mathcal{F}} < \frac{a}{b} + \frac{c}{d}\). The uniqueness of \(X^{\dagger}\) is guaranteed because \(f(X^{\dagger})\) is strictly decreasing and continuous. Therefore, under the condition that \(0 < \alpha \Delta_{\mathcal{F}} < \frac{a}{b} + \frac{c}{d}\), there is a unique, non-trivial fixed point \(X^{\dagger} > 0\) that solves the equation.

We again consider the equation:

\[
\alpha \Delta_{\mathcal{F}} X^{\dagger} = \beta A\left(X^{\dagger} \right) + \gamma R\left(X^{\dagger}, A, S^t\right)
\]

\[
\alpha \Delta_{\mathcal{F}} X^{\dagger} = \frac{a \cdot X^{\dagger}}{b + X^{\dagger}}  + \frac{c \cdot X^{\dagger}}{d + X^{\dagger}}
\]

Since $X^{\dagger}$ can be non-trivial (i.e., not equal to 0), we can see that R.H.S. can be non-zero, leading to $\alpha \Delta_{\mathcal{F}} X^{\dagger} \neq 0$.  This implies that in the fixed-point, the solution does not necessarily converge to \(H_0(G; \mathcal{F}) = \{x \in C_0(G; \mathcal{F}) \mid \mathcal{F}_{u \trianglelefteq e} x_u = \mathcal{F}_{v \trianglelefteq e} x_v \}\) (i.e., it can converge to $\mathcal{F}_{u \trianglelefteq e} x_u \neq \mathcal{F}_{v \trianglelefteq e} x_v$ for adjacent nodes $u$ and $v$). 

\end{proof}

\bibliography{aaai25}

\end{document}